
\documentclass[preprint,5p,times,twocolumn]{elsarticle}




\usepackage{amssymb}
\usepackage{appendix}
\usepackage{amsmath}
\usepackage[misc]{ifsym}
\usepackage{multirow}
\usepackage{float}
\usepackage{multicol}
\usepackage{soul}
\usepackage{color, xcolor}
\usepackage[colorlinks=true, linkcolor=blue, citecolor=blue, urlcolor=blue]{hyperref}
\usepackage{subfigure}
\usepackage{algorithm}
\usepackage{algorithmic}
\usepackage{newfloat}
\usepackage{listings}
\usepackage{booktabs}
\usepackage{colortbl}
\usepackage{multirow}
\newtheorem{theorem}{Theorem}
\newtheorem{proposition}[theorem]{Proposition}

\newtheorem{corollary}[theorem]{Corollary}
\newtheorem{definition}[theorem]{Definition}

\newenvironment{proof}{{\noindent\it Proof.}\quad}{\hfill $\square$\par}


\journal{Arxiv}

\begin{document}

\begin{frontmatter}


\title{Introducing Diminutive Causal Structure into Graph Representation Learning}
\author{Hang Gao\fnref{label1,label2}}
\author{Peng Qiao\fnref{label1}}
\author{Yifan Jin\fnref{label1,label2}}
\author{Fengge Wu\fnref{label1,label2}}
\author{\Letter Jiangmeng Li\fnref{label1,label2}}
\ead{jiangmeng2019@iscas.ac.cn}
\author{Changwen Zheng\fnref{label1}}
\address[label1]{Science \& Technology on Integrated Information System Laboratory, Institute of Software, Chinese Academy of Sciences, Beijing, China. \fnref{label1}}
\address[label2]{University of Chinese Academy of Sciences, Beijing, China. \fnref{label2}}

\begin{abstract}
When engaging in end-to-end graph representation learning with Graph Neural Networks (GNNs), the intricate causal relationships and rules inherent in graph data pose a formidable challenge for the model in accurately capturing authentic data relationships. A proposed mitigating strategy involves the direct integration of rules or relationships corresponding to the graph data into the model. However, within the domain of graph representation learning, the inherent complexity of graph data obstructs the derivation of a comprehensive causal structure that encapsulates universal rules or relationships governing the entire dataset. Instead, only specialized diminutive causal structures, delineating specific causal relationships within constrained subsets of graph data, emerge as discernible. Motivated by empirical insights, it is observed that GNN models exhibit a tendency to converge towards such specialized causal structures during the training process. Consequently, we posit that the introduction of these specific causal structures is advantageous for the training of GNN models. Building upon this proposition, we introduce a novel method that enables GNN models to glean insights from these specialized diminutive causal structures, thereby enhancing overall performance. Our method specifically extracts causal knowledge from the model representation of these diminutive causal structures and incorporates interchange intervention to optimize the learning process. Theoretical analysis serves to corroborate the efficacy of our proposed method. Furthermore, empirical experiments consistently demonstrate significant performance improvements across diverse datasets.
\end{abstract}

\onecolumn

\begin{keyword}
Graph representation learning \sep Graph Neural Network \sep Causal learning \sep Causal Structure 
\end{keyword}

\end{frontmatter}

\section{Introduction} \label{sec:intro}

Graph representation learning is a specialized field that presents unique challenges compared to other representation learning tasks such as images, videos, or text. The studied graph data finds extensive practical applications in research, such as knowledge graphs \cite{yang2023lmkg, DBLP:journals/kbs/LiZMGWYYM22, DBLP:journals/kbs/LiYZJM23}, social networks \cite{DBLP:conf/www/WangHGLL23}, and molecular analysis \cite{DBLP:conf/nips/RongBXX0HH20, DBLP:conf/nips/WangLLLJ22, su2022molecular}. Graphs inherently encapsulate richer semantics as they encode intricate relationships and connections between entities \cite{DBLP:journals/tnn/WuPCLZY21}. Such property heightened semantic complexity often leads to increased intricacy in the learning process \cite{DBLP:conf/asl/PikhurkoV09}. Additionally, due to the nature of graph data, it is subject to more stringent structural constraints \cite{DBLP:journals/access/FaezOBR21}. However, when employing GNNs for graph representation learning, conventional GNNs model the existing associations in the data through an end-to-end approach, without delving into the exploration of the inherent causal relationships present in the data \cite{DBLP:journals/tai/00010YAWP021}. Therefore, when conducting end-to-end graph representation learning with GNNs, two primary challenges arise: effectively modeling the intricate relationships and rules within the complex graph data to capture genuine causal relationships, while simultaneously preventing the model from being perturbed by potential confounding factors \cite{DBLP:conf/iclr/0002Y21}.


Recently, methods including DIR \cite{DBLP:conf/iclr/WuWZ0C22} and RCGRL \cite{DBLP:conf/aaai/GaoLQSXZ023} have introduced causal learning mechanisms into GNN model training, such as interventions and instrumental variable, to enable the model to capture causal relationships better and eliminate confounding factors. Additionally, approaches such as RCL-OG \cite{DBLP:conf/aaai/YangYWW023} and ACD \cite{DBLP:conf/clear2/LoweMSW22} directly utilize causal discovery methods to construct causal graphs from data. There is also some methods that focus on utilizing causal learning to enhance the interpretability of GNN models \cite{DBLP:conf/iclr/Liu0X23, DBLP:conf/aistats/AgarwalZL22, DBLP:conf/iclr/CucalaGKM22}. These methods have demonstrated significant effectiveness and contributed to the optimization of graph learning tasks. However, it is crucial to note that none of these methods can guarantee that the model accurately learns causal relationships. The incorporation of causal learning methods does not ensure that the relationships modeled by the model are genuinely causal. Simultaneously, methods for constructing causal graphs or boosting model interpretability struggle to effectively leverage discovered causal knowledge to guide GNN models, thus limiting their ability to enhance GNN models' capability to model causal relationships. Therefore, we cannot help but wonder: If we possess prior knowledge about specific causal relationships, can we inject this knowledge into GNN models and ensure that the model can reliably identify and model these relationships? Furthermore, can we ensure that such injection of prior knowledge is applicable and helpful to a wide range of scenarios?

One approach to addressing this challenge is to directly inject the rules or relationships that correspond to the graph data into the model. For typical machine learning tasks, such as image recognition and text analysis, we possess well-founded insights about causal structures that we can express symbolically. These insights range from common-sense intuitions about how the world operates to advanced scientific knowledge. Recent research shows that introducing such causal structures into neural network models can significantly enhance their performance \cite{geiger2022inducing}. However, in the context of graph representation learning, achieving this directly is challenging. 
Due to the complexity of graph data, it is difficult to identify a comprehensive causal structure that can describe the general rules or relationships governing graph data as a whole. Instead, we often find ourselves dealing with what we term `diminutive causal structures', which can only be used to describe the rules that apply to specific, limited subsets of graph data (Please refer to Definition \ref{df:ck} for a precise definition of `diminutive causal structure'). Therefore, there are several fundamental questions that need to be answered here: What is the relationship between graph representation learning models and diminutive causal structures? Does the introduction of such diminutive causal structures still contribute to improving the effectiveness of graph representation learning? 

\begin{figure*} 
\subfigure[Visualization of a conventional GNN approach's learning degree of the diminutive causal structures during optimization on benchmark datasets. We specifically measure the cosine similarity between the output representations of the causal model and the GNN model.] { \label{fig:mtv1}
\includegraphics[width=0.7\columnwidth]{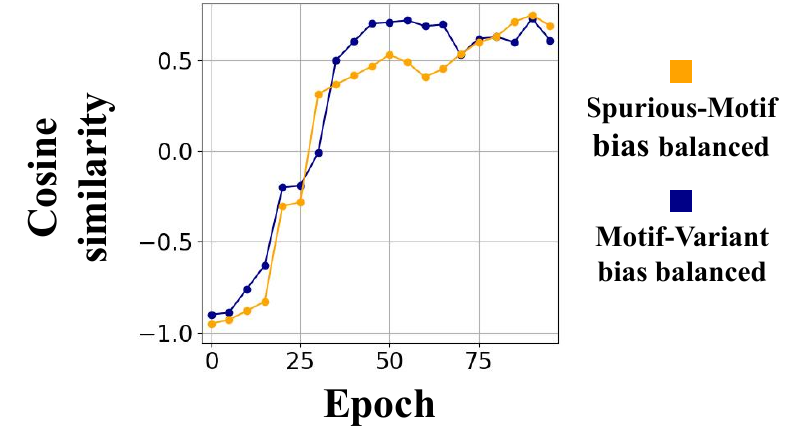}
}\hspace{3mm}
\subfigure[Comparisons between the backbone GNNs (Local Extremum GNN \cite{DBLP:conf/aaai/RanjanST20} and ARMA \cite{DBLP:conf/aaai/0001RFHLRG19}) and the proposed method on various datasets. The left plot demonstrates the KL divergence between the features derived by the model capable of representing diminutive causal structures and the candidate GNN model. The right plot collects the classification performance of the compared methods. The empirical observations jointly support that our proposed method can better learn the diminutive causal structures and consistently outperform the baseline.] {\label{fig:mtv2}
\includegraphics[width=1.3\columnwidth]{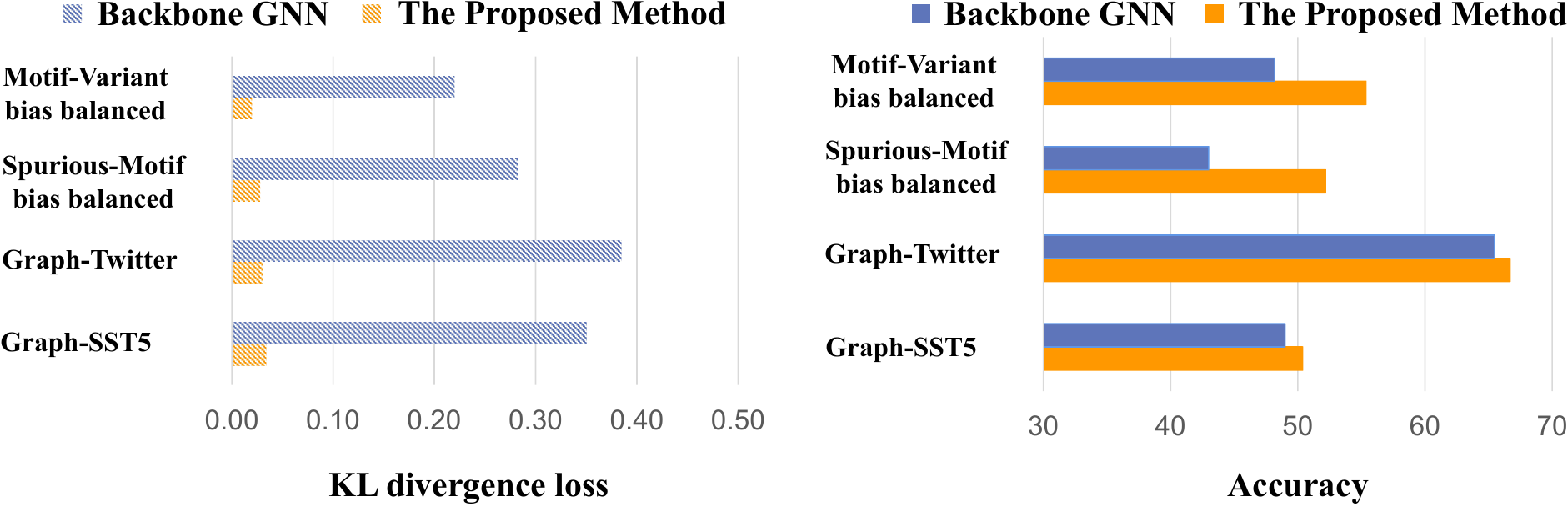}
}
\vskip -0.1in
\caption{ The motivating explorations. }
\vskip -0.2in
\label{fig}
\end{figure*} 

To acquire the answers, we intuitively introduce the experiments to investigate the relationship between GNN models and diminutive causal structures. As demonstrated in Figure \ref{fig:mtv1}, GNNs tend to converge towards models capable of representing diminutive causal structures for specific graph data as training progresses. Based on this observation, we propose that as training progresses and GNN acquires more information about the training dataset, certain internal structures within the GNN model gradually approach diminutive causal structures. This phenomenon suggests that such causal structures benefit the task, leading the GNN to gradually approximate similar structures during end-to-end training. However, it is evident that the GNN fails to achieve complete consistency with diminutive causal structures. Hence, we hypothesize that actively introducing such causal structures can further optimize the GNN and yield improved performance. To facilitate further exploration, we conduct experiments on various domains in Figure \ref{fig:mtv2}. Based on the empirical results, it can be observed that there is a general boost of model performances of GNNs introducing diminutive causal structures to a certain extent across different tasks. The observation leads to an empirical conclusion: \textit{introducing diminutive causal structure into graph representation learning practically improves the model performance}.


In light of such empirical finding, we propose the \textit{\underline{D}iminutive \underline{C}ausal \underline{S}tructure guided \underline{G}raph Representation \underline{L}earning}, dubbed \textit{DCSGL}, to improve the model to learn the diminutive causal structure that is related to the graph representation learning tasks. DCSGL achieves the aforementioned objectives by enabling the GNN model to learn knowledge from higher-level causal models constructed with diminutive causal structures. We also conduct a theoretical analysis of DCSGL based on the Structural Causal Model (SCM) \cite{DBLP:journals/ijon/Shanmugam01, pearl2009causal, glymour2016causal}, demonstrating its effectiveness. Moreover, we further incorporate the concept of interchange intervention \cite{geiger2022inducing}, enabling DCSGL to facilitate more comprehensive learning. The sufficient comparisons on various datasets, including the crafted and real-world datasets, support the consistent effectiveness of DCSGL. 

\textbf{Contributions}:

\begin{itemize}

    \item We discover an intriguing observation, which leads to an empirical conclusion: introducing diminutive causal structure into GNNs improves the model performance. Additionally, we undertake a thorough investigation, employing comprehensive theoretical analyses and rigorous proofs, to substantiate and elucidate this conclusion.

    \item We introduce a novel DCSGL method, which enables the GNN model to learn diminutive causal structure that is associated with the graph representation learning task, thereby refining its performance.

    \item Extensive empirical evaluations on various datasets, including crafted and benchmark datasets, demonstrate the effectiveness of the proposed DCSGL.
    
\end{itemize}

The outline of the article includes a review of related work in Section 2, covering research on causal learning and GNNs, as well as the application of causal learning methods to GNNs. Section 3 outlines the methodology, introducing foundational concepts of causal structures and Diminutive Causal Structures, along with interventional exchange methods. Theoretical analysis in Section 4 includes the introduction of a structural causal model and a detailed examination of the proposed approach. Finally, Section 5 validates the approach through empirical studies, leveraging topological and semantic domain knowledge for comparison with other methods and further experimental analysis.

\section{Related Works}


\subsection{Causal Learning} 
Causal learning employs statistical causal inference techniques \cite{glymour2016causal} to uncover the causal relationships among observable variables. Integrating causal learning into deep learning algorithms is a growing area of interest. Researchers are exploring how causal models can enhance the performance, interpretability, and fairness of deep learning models \cite{krueger2021out,DBLP:conf/icml/ZhouLZZ22,DBLP:conf/cvpr/LinDWZ22,jin2022supporting,li2022supporting,li2022metamask}. This includes developing methods to incorporate causal assumptions \cite{geiger2022inducing}, leveraging causal structures in deep learning architectures \cite{DBLP:conf/nips/0001ZDZ22}, and incorporating causal explanations into model predictions \cite{DBLP:conf/aaai/LiCM22}. 

However, traditional causal inference, often reliant on direct interventions in the studied system \cite{DBLP:conf/aaai/ShpitserP06}, encounters challenges when employed in deep learning models. The causal learning methods used in deep learning models often struggle to implement changes in experimental scenarios corresponding to the dataset. Consequently, while these methods have demonstrated promising results, they cannot guarantee the definitive learning of causal relationships.


\subsection{Graph Neural Networks}
GNNs have introduced the mechanism of neural networks into the realm of graph representation learning. Since the inception of the first GNN model, multiple variants have been proposed, including GCN \cite{kipf2016semi}, GAT \cite{velivckovic2017graph}, GIN \cite{xu2018powerful}, Local Extremum GNN \cite{DBLP:conf/aaai/RanjanST20}, GSAT \cite{miao2022interpretable} among others. These models perform graph representation learning by propagating information between adjacent nodes. In addition, to improve performance and adapt to a broader range of downstream tasks, paradigms commonly employed by other types of deep learning architectures have been applied to GNNs, such as unsupervised learning \cite{you2020does, xu2021infogcl, DBLP:conf/aaai/CoorayC22, SUN2021107564}, semi-supervised learning \cite{baranwal2021graph, DBLP:conf/nips/YueZZL22}, meta learning \cite{DBLP:conf/ijcai/GaoLQS0Z22, DBLP:conf/kdd/GuoLZZZX22}, and reinforcement learning \cite{jiang2019graph, DBLP:conf/kdd/0001GZMTLK22}. The aforementioned GNNs, akin to other types of neural networks, are trained using an end-to-end approach, fundamentally modeling the probabilistic relationships between the data. However, this manner of relationship modeling hinders the accurate representation of causal relationships, particularly when dealing with intricate and semantically rich graph data.

\subsection{Application of Causal Learning within GNNs} 


Despite the achievements of GNNs, these approaches encounter difficulties in explicitly uncovering causal relationships. To overcome this limitation, researchers have integrated causal inference techniques into GNN-based graph representation models \cite{DBLP:journals/sensors/HoangJYYJL23, prado2022survey}. Some focused on utilizing causal learning to enhance the interpretability of GNN models \cite{DBLP:conf/iclr/Liu0X23, DBLP:conf/aistats/AgarwalZL22, DBLP:conf/iclr/CucalaGKM22, DBLP:conf/aaai/LiDSQHC23, DBLP:conf/icml/WuLJJRNL23}, e.g., GLGExplainer \cite{DBLP:conf/iclr/AzzolinLBLP23} tries to represent the causal structure within the model more understandably. Some methods attempt to directly mine causal structures from the data \cite{DBLP:conf/aaai/YangYWW023,DBLP:conf/clear2/LoweMSW22}. Other methods facilitate the incorporation of causality through data intervention \cite{DBLP:conf/iclr/WuWZ0C22,DBLP:conf/nips/0002ZB00XL0C22, DBLP:conf/aaai/GaoLQSXZ023}. Furthermore, some works have successfully combined GNNs with causal learning for time-series data, achieving promising results \cite{duan2022multivariate, DBLP:journals/prl/WangDHXFR22,gao2023rethinking}. As a representative approach, DIR \cite{DBLP:conf/iclr/WuWZ0C22} proposes to employ adaptive graph neural networks to identify distinct data within a graph, and then guide causal interventions to introduce causality into the model to boot the performance of models within Out-Of-Distribution (OOD) tasks.

However, the aforementioned methods, while achieving satisfactory results, still exhibit certain limitations. Approaches aiming to enhance model interpretability and conduct causal model mining have not effectively reinforced GNN models using the acquired causal relationships. Similarly, methods that leverage causal learning for GNN model improvement encounter challenges in accurately learning causal relationships due to the intricate nature of graph representation learning. Theoretically, the prerequisites of causal reasoning methods are challenging to meet in the context of GNNs, given the complex nature of data processing and the inherent nature of end-to-end training. Therefore, our approach endeavors to take a different route by directly injecting easily obtainable diminutive causal structures into the model to address this issue.

\section{Methodology}

\begin{figure*}[ht]
	\centering
    \includegraphics[width=1\textwidth]{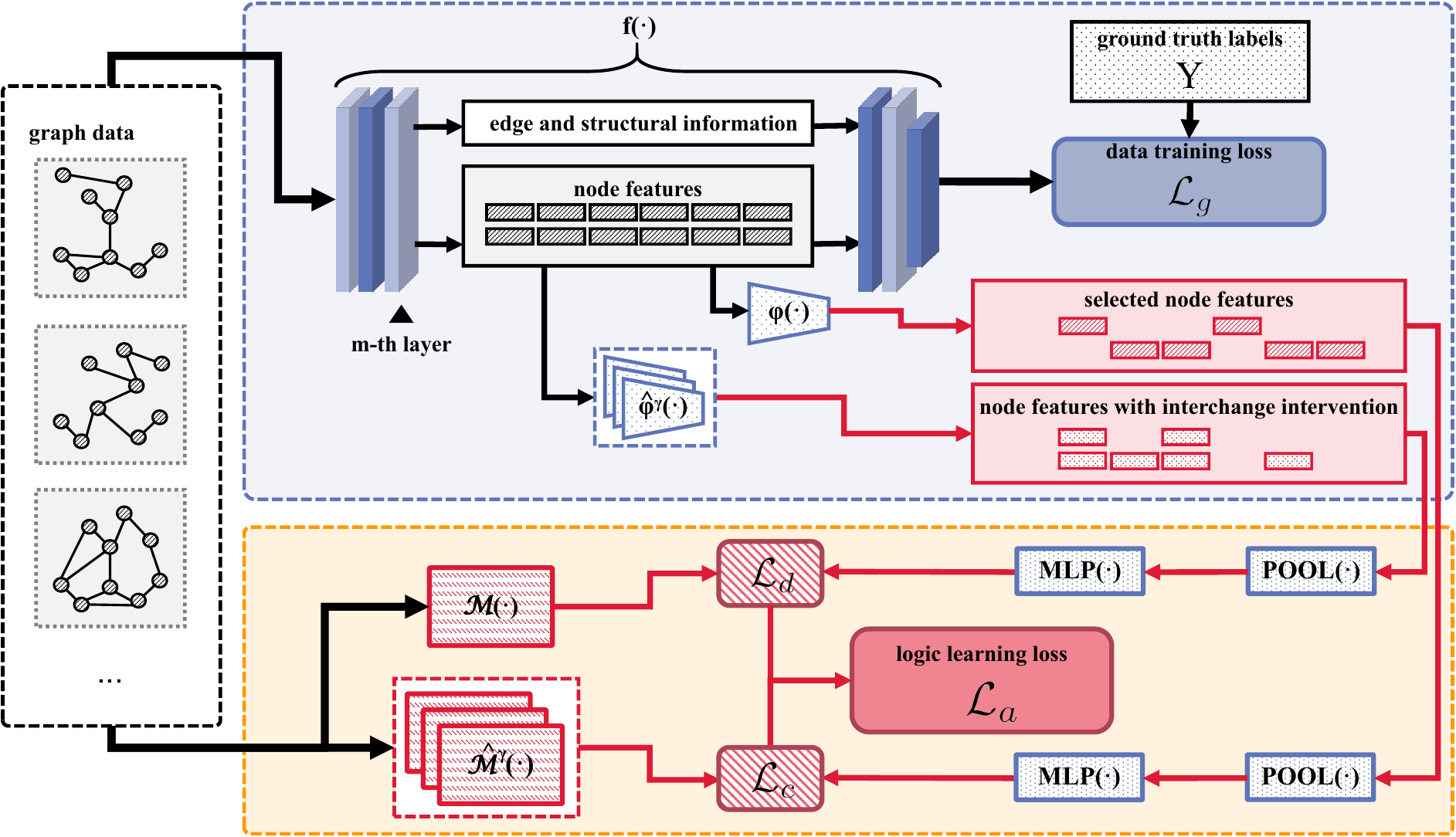} \\
	\caption{The overall framework of DCSGL.}
    \label{fig:framework}
\end{figure*}

In this section, we introduce our methodology, DCSGL, providing a detailed exposition on our proposed concept of `diminutive causal structures' and explaining how we integrate them into our model. We will also outline the enhancements made to the entire process to achieve optimal results. For clarification, we summarizes the main notations within Table \ref{tab:gn}.

\begin{table}[h] \scriptsize
	\centering
 	\caption{Glossary of notations.}
    \vskip 0.1in
	\begin{tabular}{c|c}
		\hline\rule{0pt}{8pt}
		Notation      &  Description \\
		\hline\rule{0pt}{8pt}
		$G$ & Graph sample. \\\rule{0pt}{8pt}
		$X$ & The causal part within the graph sample. \\\rule{0pt}{8pt}
        $Y$ & Label \\\rule{0pt}{8pt}
        $C$ & The confounder within the graph sample. \\\rule{0pt}{8pt}
        $\widetilde{S}$ & Factors influencing $X$ that can be explicitly delineated.  \\\rule{0pt}{8pt}
        $\mathcal{M}(\cdot)$ & Concrete model that represent certain diminutive causal structure. \\\rule{0pt}{8pt}
        $\phi(\cdot)$ & The node selection procedure. \\\rule{0pt}{8pt}
        $\widetilde{T}$ & The output prediction of $\mathcal{M}(\cdot)$. \\\rule{0pt}{8pt}
        $q(\cdot)$ & $q(\cdot)$ represents the probability density function that $\widetilde{T}$ follows. \\\rule{0pt}{8pt}
        $t$ & The value. \\\rule{0pt}{8pt}
        $\mathcal{f}(\cdot)$ & GNN model. \\\rule{0pt}{8pt}
        $T$ & The prediction of $\widetilde{T}$ given based on $\mathcal{f}(\cdot)$. \\\rule{0pt}{8pt}
        $p(\cdot)$ & $p(\cdot)$ represents the probability density function that $T$ follows. \\\rule{0pt}{8pt}
        $\mathcal{T}$ & The value space of $\widetilde{T}$ and $\widetilde{T}$. \\\rule{0pt}{8pt}
        $KL(\cdot)$ & The KL-divergence. \\\rule{0pt}{8pt}
        $\hat{\mathcal{M}}^{\gamma}(\cdot)$ & $\mathcal{M}(\cdot)$ after $\gamma$-th  interchange intervention \\\rule{0pt}{8pt}
        $\hat{\phi}^{\gamma}(\cdot)$ & $\phi(\cdot)$ after $\gamma$-th interchange intervention \\\rule{0pt}{8pt}
        $\mathcal{D}_{KL}$ & Function that calculates the KL divergence via a discrete manner. \\\rule{0pt}{8pt}
        $G_{i}$ & The $i$-th graph sample. \\\rule{0pt}{8pt}
        $N$ & Total number of the graph samples. \\\rule{0pt}{8pt}
        $K$ & Total number of interchange intervention. \\\rule{0pt}{8pt}
        $f^{m}(\cdot)$ & The output of the $m$-th layer of $f(\cdot)$. \\\rule{0pt}{8pt}
        $POOL(\cdot)$ & Average pooling. \\\rule{0pt}{8pt}
        $MLP(\cdot)$ & Multi-layer perception. \\\rule{0pt}{8pt}
        $I(;)$ & Mutural information.\\\rule{0pt}{8pt}
        $S^{*}$ & Factors influencing $X$ that can't be explicitly delineated.\\\rule{0pt}{8pt}
        $R$ & The model's representation of $G$. \\
		\hline
	\end{tabular}
	\label{tab:gn}
\end{table}

\subsection{Preliminary}
We begin by reviewing some concepts related to causal learning, as they will be employed in the subsequent narrative.

\subsubsection{Causal Structure}
Causal structure refers to the underlying arrangement or configuration of causal relationships among variables or events in a system \cite{DBLP:journals/jmlr/Pearl10}. It represents the way in which different elements influence one another and the directionality of these influences. Understanding the causal structure of a system is crucial in causal inference and learning, as it helps uncover the relationships between variables and enables predictions or interventions based on a deeper comprehension of the cause-and-effect mechanisms at play. We will leverage the concept of causal structure, coupled with graph representation learning, to analyze which types of causal Structures are more readily attainable and expressible. Additionally, we will elucidate how to incorporate them into our GNN model.

\subsubsection{Interchange Intervention}
\label{sec:intinv}
For a model $\mathcal{M}$ that is used to process two different input samples, $X_{a}$ and $X_{b}$, the interchange intervention \cite{geiger2022inducing} conducted on a model $\mathcal{M}$ can be viewed as providing the output of the model $\mathcal{M}$ for the input $X_{a}$, except the variables $V$ are set to the values they would have if $X_{b}$ were the input. Interactive interventions enable improved training of the internal logical structure of a model without the need for additional sample augmentation. We employ interchange intervention to artificially modify the feature representations within the training model, facilitating a better alignment of the model with diminutive causal structures.

\subsection{Diminutive Causal Structure}

\begin{figure}[h]
	\centering
    \includegraphics[width=0.35\textwidth]{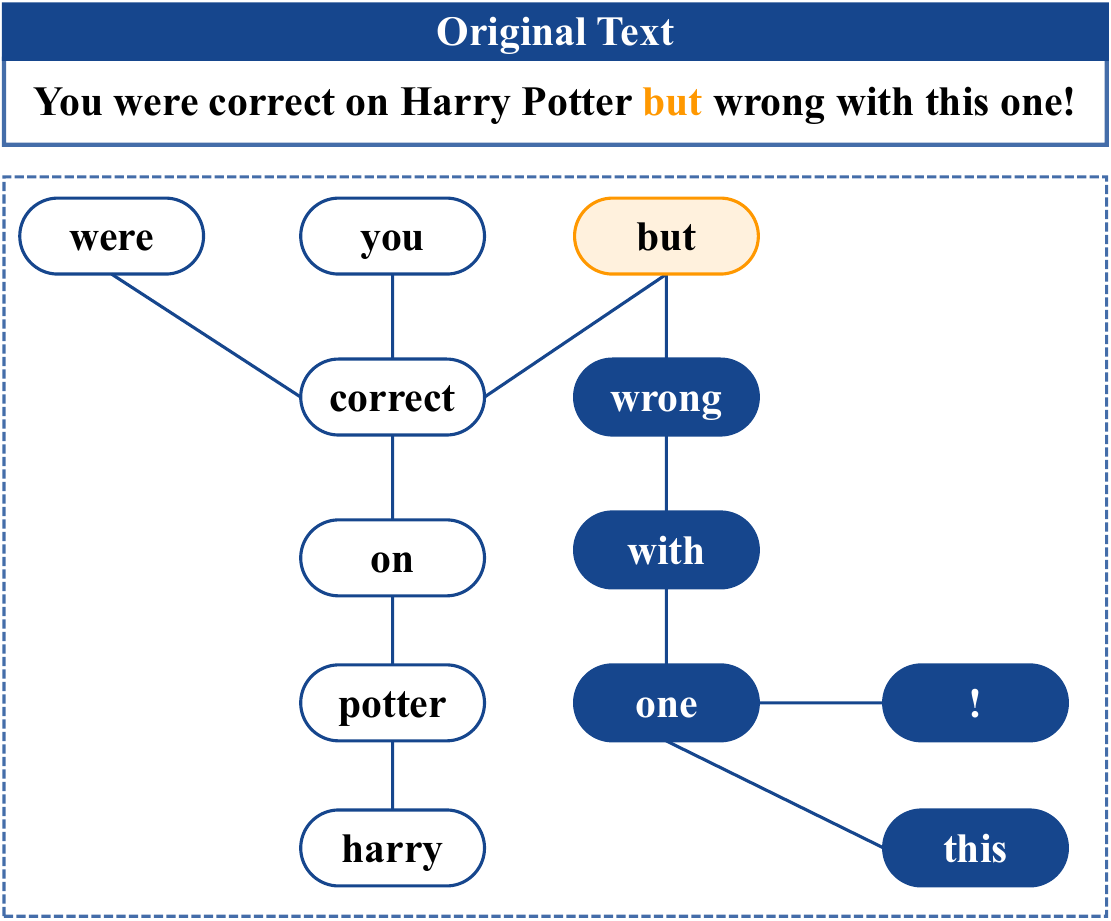} \\
	\caption{Example graph sample.}
    \label{fig:example}
\end{figure}



In order to convey the information more clearly, we provide the precise definition of diminutive causal structure. For graph $G$, we denote the graph data possesses causal relationships with the ground truth labels as causal part $X$ and the data unrelated to the labels as confounder $C$. In graph representation learning, accurately modeling the causal relationship between $X$ and $Y$ proves challenging, involving intricate task-specific knowledge that is complex and difficult to analyze and acquire. However, more generalized foundational knowledge about $X$ can be acquired. Specifically, based on domain-specific knowledge, we can analyze the factors $\widetilde{S}$ that influence the composition of $X$. The causal structure between $\widetilde{S}$ and $X$ is then referred to as the diminutive causal structure. We provide its formal definition below:

\begin{definition}
\label{df:ck}
(Diminutive Causal Structure). Diminutive causal structure denotes a causal structure that represents the causal relationship between a subset of data in $X$ and the factors $\widetilde{S}$. $\widetilde{S}$ be the causes of $X$. Furthermore, there is no direct causal effect between $\widetilde{S}$ and $Y$. 
\end{definition}

We will illustrate the diminutive causal structure based on a specific example. In Figure \ref{fig:example}, we present a graph sample from the Graph-Twitter dataset \cite{yuan2020explainability}. The Graph-Twitter dataset consists of a series of syntactic dependency tree graphs extracted from a collection of movie comments. The task of the dataset is to determine, based on these syntactic dependency tree graphs, whether the corresponding comments contain positive or negative evaluations. 

Specifically regarding the sample illustrated in Figure \ref{fig:example}, we have provided its corresponding raw textual data, namely, `You were correct on Harry Potter but wrong with this one!' In this sentence, the conjunction `but' is present, and based on linguistic knowledge, we can determine that there is a semantic contrast between the content before and after `but.' In the graph below, we use white nodes to represent the content before `but' and deep blue to represent the content after `but.' 

At this point, the presence of `but' in the sentence can be considered as cause factor $\widetilde{S}$, and such $\widetilde{S}$ leads to a semantic shift between the content represented by the white nodes and that represented by the deep blue nodes in the graph. Furthermore, the knowledge related to `but' is associated with the causal information $X$ in the graph data $G$. As the Graph-Twitter dataset is a semantic sentiment analysis dataset, domain-specific knowledge, such as `but' indicating a semantic contrast, is relevant to the task. We can consider that conjunction `but' leads to a semantic contrast as a diminutive causal structure because it reveals a causal relationship between $X$ and $\widetilde{S}$. Meanwhile, `but' is not directly related to the labels; the presence or absence of `but' cannot be used to determine whether the entire graph corresponds to a positive or negative evaluation.


\subsection{Guided Learning}

Next, we incorporate the diminutive causal structure into our model to enhance its performance. To achieve this, we represent the diminutive causal structure as a concrete model, denoted as $\mathcal{M}(\cdot)$. $\mathcal{M}(\cdot)$ output the prediction $\widetilde{T}$ about $\widetilde{S}$ based on graph data $G$. For instance, using the example above, we can define $\mathcal{M}(\cdot)$'s output $\widetilde{T}$ as whether the conjunction `but' exists in the graph. In Section \ref{sec:mds1} and \ref{sec:mds2}, we provide more detailed descriptions of the specific designs of $\mathcal{M}(\cdot)$ for different diminutive causal structures.

With $\mathcal{M}(\cdot)$, our objective is to enable our GNN model, denoted as $f(\cdot)$, to possess an internal causal structure that realizes $\mathcal{M}(\cdot)$, so as to learn the diminutive causal structure. In simple terms, we believe incorporating the diminutive causal structure within $\mathcal{M}(\cdot)$ into our model is advantageous for training. This is akin to introducing domain-specific knowledge to the model. We also leverage Theorem \ref{th:c} to theoretically substantiate the effectiveness of this approach.

To achieve the objective of introducing $\mathcal{M}(\cdot)$, we enforce $f(\cdot)$ to output a prediction of $\widetilde{S}$. We denote such prediction as $T$, $T \sim q(t)$. $t$ is the value. $q(t)$ is the corresponding probability density function of $T$. $\widetilde{T}$ and $T$ share the same value space $\mathcal{T}$. Furthermore, we assume $\widetilde{T} \sim p(t)$. $p(t)$ is the corresponding probability density function of $\widetilde{T}$. Intuitively, if given $\widetilde{S}$, the output distributions of $f(\cdot)$ and $\mathcal{M}(\cdot)$ are the same, i.e., the following equation holds: 
\begin{gather}
p(t|\widetilde{s})=q(t|\widetilde{s}), t \in \mathcal{T}, \widetilde{s} \in \mathcal{\widetilde{S}}, 
\end{gather} 
then $f(\cdot)$ has learned the knowledge that $\mathcal{M}(\cdot)$ possesses. Theorem \ref{th:ori} gives a theoretical justification for such intuition. We utilize the KL divergence to assess the distance between $p(t|\widetilde{s})$ and $q(t|\widetilde{s})$, and formulate our training objective as follows: 
\begin{gather}
\mathcal{L}_{o} = KL\Big(p(t|\widetilde{s}) || q(t|\widetilde{s})\Big). 
\label{eq:Lo} 
\end{gather} 

In practical, we could let $\mathcal{M}(\cdot)$ and $f(\cdot)$ directly predict the probability values of $p(t|\widetilde{s})$ and $q(t|\widetilde{s})$. For training set $\{ G_{i} \}^{N}_{i=1}$, we propose the following loss function: 
\begin{gather} 
\mathcal{L}_{c} = \sum_{i=1}^{N} \mathcal{D}_{KL}\Big(\mathcal{M}(G_i), f(G_i)\Big), 
\label{eq:Lc2} 
\end{gather} 
where $\mathcal{D}_{KL}(\cdot)$ calculates the KL divergence of two input distributions via a discrete manner. We have provided a justification for the rationality of this loss function through Corollary \ref{corollary:Lc2}. 

With Equation \ref{eq:Lc2}, we are able to align the output of function $f(\cdot)$ with that of $\mathcal{M}(\cdot)$. By doing so, we effectively induce convergence in the internal logic of $f(\cdot)$ towards that of $\mathcal{M}(\cdot)$. However, for downstream tasks, our objective is not to make $f(\cdot)$ identical to $\mathcal{M}(\cdot)$. What we aspire to achieve is the incorporation of the causal structure represented by $\mathcal{M}(\cdot)$ within $f(\cdot)$, while allowing $f(\cdot)$ to acquire necessary knowledge from generic dataset-based training. To accomplish this, we constrain the scope of the impact of the loss function $\mathcal{L}_{c}$. Specifically, we opt to select specific node features from the $m$-th layer of $f(\cdot)$, with $m$ as a hyperparameter. In essence, we choose to extract node features from the $m$-th layer of $f(\cdot)$ as a means to ensure that $f(\cdot)$ captures the causal structure represented by $\mathcal{M}(\cdot)$, while retaining the ability to learn essential knowledge through general dataset-based training. Furthermore, we only select the graph data that may be related to $S$. For instance, for the data corresponding to Figure \ref{fig:example}, we only choose the features of the lexical nodes corresponding to the occurrence of `but' in the sentence. We denote the selection procedure as $\phi(\cdot)$. As described in the example above, the framework of $\phi(\cdot)$ is closely tied to the chosen diminutive causal structure. Therefore, we provide a detailed explanation of the concrete framework of $\phi(\cdot)$ in the specific exposition of $\mathcal{M}(\cdot)$ in sections \ref{sec:mds1} and \ref{sec:mds2}. Next, we conduct average pooling \cite{kipf2016semi} on the selected node features, the project them into probability predictions using a Multi-layer Perception (MLP) \cite{DBLP:conf/icaisc/Grum23}. Therefore, we can formulated our altered loss fuction as follows:
\begin{gather} 
\mathcal{L}_{c} = \sum_{i=1}^{N} \mathcal{D}_{KL}\Big(\mathcal{M}(G_i), MLP \circ POOL \circ \phi \circ f^{(m)}(G_i) \Big), 
\label{eq:Lc}
\end{gather} 
where $MLP(\cdot)$ denotes the Multi-layer Perception, $POOL(\cdot)$ denotes the average pooling operation, $\circ$ represents the composition of functions, $f^{(m)}(G_i)$ denote the output of the $m$-th layer of $f(\cdot)$.

\subsection{Enhancement with Interchange Interventions}
\label{sec:intercinterv}

The aforementioned design delineates a viable approach to facilitate the GNN model in acquiring knowledge regarding the diminutive causal structure, enabling $f(\cdot)$ to assimilate precise causal structural information encapsulated in $\mathcal{M}(\cdot)$. However, it is noteworthy that the dataset employed may not comprehensively cover graph data related to $\widetilde{S}$; in fact, only a minute fraction may be directly associated with $\widetilde{S}$. For example, as previously mentioned, the incidence rate of the word 'but' in the sample extracted from the Graph-Twitter dataset is merely 5\%. Such limited data may prove inadequate for robust model training. To ensure that the model $f(\cdot)$ can effectively capture the desired knowledge, we leverage interchange intervention \cite{geiger2022inducing, DBLP:conf/emnlp/GeigerCKP19}, a method rooted in causal theory , to facilitate and augment the learning process. A comprehensive introduction to interchange intervention is provided in Section \ref{sec:intinv}.

We conduct interchange intervention on the GNN model by modifying $\phi(\cdot)$, which selects the node features for training. By changing $\phi(\cdot)$, we effectively altered the original input values of the subsequent neural network, akin to what an interchange intervention accomplishes. Subsequently, given our explicit understanding of the intervention we performed, we can instruct model $\mathcal{M}(\cdot)$ to provide accurate altered predictions for $\widetilde{T}$ given certain interchange intervention, thereby augmenting the quantity of data available for training. We continue to elucidate using the example in Figure \ref{fig:example}. In Figure \ref{fig:example}, the nodes within the graph are input into the model for training. We have the capability to selectively remove specific nodes from such sentences. For instance, we can eliminate 'but' and the subsequent content, thereby altering the semantic meaning of the sentence. Clearly, in this case, the word 'but' no longer exists in the sentence, along with its associated semantic shift. We can instruct model $\mathcal{M}(\cdot)$ to predict the probability of the existence of `but' as being zero under these circumstances. 

Our desideratum is that the output of the $f(\cdot)$ and $\mathcal{M}(\cdot)$ are the same under identical interchange interventions, so as to align the two models. We denote the $\phi(\cdot)$ and $\mathcal{M}(\cdot)$ after interchange intervention as $\hat{\phi}^{\gamma}(\cdot)$ and $\hat{\mathcal{M}}^{\gamma}(\cdot)$ correspondingly, where $\gamma$ represents the index of different interchange interventions. The specific design details of $\hat{\mathcal{M}}^{\gamma}(\cdot)$ and $\hat{\phi}^{\gamma}(\cdot)$ are expounded in Section \ref{sec:mds1} and \ref{sec:mds2}. Then, we propose the following loss function:
\begin{gather} 
\mathcal{L}_{d} = \sum_{i=1}^{N} \sum_{\gamma=1}^{K} \mathcal{D}_{KL}\Big(\hat{\mathcal{M}}^{\gamma}(G_i), MLP \circ POOL \circ \hat{\phi}^{\gamma} \circ f^{(m)}(G_i) \Big), 
\end{gather} 
where $K$ denotes the total number of interchange interventions.

\subsection{Training Procedure}
The overall framework is illustrated in Figure \ref{fig:framework}. For learning diminutive causal structure, we define our training objective with both $\mathcal{L}_{c}$ and $\mathcal{L}_{d}$:
\begin{gather}
\mathcal{L}_{a} = \mathcal{L}_{c} + \lambda \mathcal{L}_{d},
\label{eq:L2}
\end{gather}
$\mathcal{L}_{a}$ denotes the total loss for diminutive causal structure learning, $\lambda$ is a hyperparameter that balances the influence of learning under interchange intervention. Besides diminutive causal structure, our GNN model is also trained with conventional labeled data. The training object of which can be formulated as:
\begin{gather}
	 \mathcal{L}_{g} = \sum_{i=1}^{N} \mathcal{H}\Big(  f(G_{i}), Y_{i}\Big),
	\label{eq:lg0}
\end{gather}
where $\mathcal{H}(\cdot)$ calculates the cross entropy loss, and $f^{H}(\cdot)$ denotes the projection head for label prediction. We update the model alternatively with $\mathcal{L}_{g}$ and $\mathcal{L}_{a}$. Then $f(\cdot)$ will be utilized for the performance test.

\section{Theoretical Analysis}
\label{s:tac}

\subsection{Causal Modeling with Structural Causal Model} 
 To embark on our investigation, it is imperative to establish a sound modeling of our problem scenario. We employ Structural Causal Models (SCM) to fulfill this task. SCM \cite{DBLP:journals/ijon/Shanmugam01} is a framework utilized to describe the manner in which nature assigns values to variables of interest. Formally, an SCM consists of variables and the relationships between them. Each SCM can be represented as a graphical model consisting of a set of nodes representing the variables, and a set of edges between the nodes representing the relationships. 

\begin{figure}
    \centering
    \includegraphics[width=0.15\textwidth]{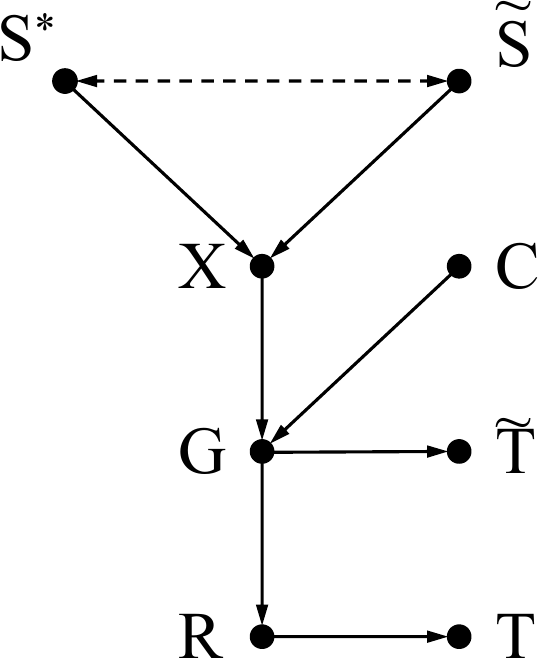}
    \caption{Graphical representation of the SCM.}
    \vskip -0.2in
    \label{fig:SCM}
\end{figure}

We formalize the candidate problem by using an SCM illustrated in Figure \ref{fig:SCM}. Among the SCM, $\widetilde{S}$, as defined before, refers to those factors that influence the structure of graph data, and their causal relationships with graph data can be explicitly delineated. $S^{*}$ denotes the rest factors. $\widetilde{S}$ and $S^{*}$ together decide the formulation of $X$, which represents the data within the graph $G$ that holds causal relationship with the ground truth labels. $C$ represents any confounding factors present within the data, $G$ represents the graph data itself, and $R$ represents the output representation of $f(\cdot)$. $\widetilde{T}$ and $T$ are the outputs of the high-level causal model and the GNN. The links in Figure \ref{fig:SCM} are as follows: 

\begin{itemize}
\item $S^{*} \to X, \widetilde{S} \to X.$ The form of $X$ is desicde with $S^{*}$ and  $\widetilde{S}$.
\item $X \to G \gets C.$ The graph data $G$ consists of two parts: $X$ and confounder $C$.
\item $G \to R.$ $f(\cdot)$ encodes $G$ into representation $R$ that consists of the node representations of different layers and the graph representation.
\item $S^{*} \leftarrow - \rightarrow \widetilde{S}.$ The bidirectional arrow with a dashed line indicates that the causal relationship between the two cannot be confirmed.
\item $G \to \widetilde{T} , R \to T.$ $\widetilde{T}$ and $T$ are variables that calculated based on $G$ and $R$.
\end{itemize}

Furthermore, we propose the following proposition:
\begin{proposition}
The SCM illustrated in Figure \ref{fig:SCM} can represent scenarios encountered when applying our model to general graph learning tasks.
\end{proposition}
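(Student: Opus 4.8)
The plan is to argue that the SCM in Figure~\ref{fig:SCM} is a faithful abstraction of the data-generating and learning pipeline that underlies a generic graph representation learning task, by checking each structural assertion against the operational meaning of the objects involved. The proof is essentially a verification that (i) every causal link drawn is justified by a genuine generative dependence, and (ii) no omitted link is needed, i.e., the absences encoded by the graph (in particular the lack of a direct edge $\widetilde{S}\to Y$, consistent with Definition~\ref{df:ck}, and the lack of a direct edge $C\to X$) hold in the scenarios we target. First I would spell out the generative reading: a ground-truth mechanism produces the label-relevant substructure $X$ from two groups of latent causes, the explicitly characterizable factors $\widetilde{S}$ and the residual factors $S^{*}$, which motivates the edges $\widetilde{S}\to X$ and $S^{*}\to X$; since domain knowledge does not let us assert the direction (or absence) of dependence between $\widetilde{S}$ and $S^{*}$, the dashed bidirected edge $\widetilde{S}\dashleftarrow\!\!\dashrightarrow S^{*}$ is the weakest honest encoding, and it is precisely what keeps the argument general. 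Then the observed graph $G$ is assembled from the causal part $X$ together with a confounder/nuisance part $C$, giving $X\to G\gets C$; the GNN $f(\cdot)$ deterministically maps $G$ to its internal representation $R$ (node embeddings across layers plus the pooled graph embedding), giving $G\to R$; and finally $\widetilde{T}$ and $T$ are read off as deterministic functions of $G$ (via $\mathcal{M}(\cdot)$) and of $R$ (via $MLP\circ POOL\circ\phi\circ f^{(m)}$) respectively, giving $G\to\widetilde{T}$ and $R\to T$.

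Next I would discharge the more delicate absences. For $\widetilde{S}\not\to Y$: by construction (Definition~\ref{df:ck}) the diminutive causal structure is chosen so that $\widetilde{S}$ is a cause of $X$ with no direct causal effect on $Y$ — e.g., the presence of ``but'' shapes the syntactic/semantic form of the sentence but does not by itself determine the sentiment label — so any dependence between $\widetilde{S}$ and $Y$ is routed through $X$, and the graph correctly shows no $\widetilde{S}\to Y$ arrow (indeed $Y$ does not even appear in Figure~\ref{fig:SCM}, since the SCM is stated for the auxiliary prediction targets $\widetilde{T},T$; I would note that when $Y$ is appended, the only incoming edge is $X\to Y$). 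For $C\not\to X$: the defining property of a confounder/nuisance part in this setting is that it carries no information about the label-relevant substructure, so it is generated independently of $X$'s causes; cases where the ``confounder'' actually influences $X$ fall outside the scope claimed by the proposition, and I would state that restriction explicitly. I would also remark that $R\to T$ rather than $G\to T$ reflects the architectural fact that the GNN's auxiliary head sees only the learned representation, not the raw graph, which is exactly the bottleneck exploited by the guided-learning loss $\mathcal{L}_c$.

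The main obstacle is that the statement is a modeling claim rather than a derivation, so ``proof'' here means a convincing completeness-and-soundness argument, and the genuine content is in delimiting the word ``general'': I expect the crux to be defending the two absent edges — the independence of $C$ from $X$'s causes and the absence of a direct $\widetilde{S}$–$Y$ effect — since these are the assumptions a skeptical reader would push back on, and the honest move is to make them part of the hypothesis (they are already built into Definition~\ref{df:ck} for the $\widetilde{S}$–$Y$ case). A secondary subtlety is the bidirected dashed edge: I would justify it as the appropriate device for ``unknown confounding / unknown direction'' between $\widetilde{S}$ and $S^{*}$ in the potential-outcomes-compatible (semi-Markovian) SCM formalism of \cite{pearl2009causal}, and point out that leaving it bidirected is what prevents the SCM from overcommitting, thereby preserving the generality the proposition asserts. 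With these points in place the verification of the remaining (deterministic, definitional) edges is routine.
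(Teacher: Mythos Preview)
Your proposal is sound as a justification of the modeling claim, but it proceeds along a different route from the paper's own proof. You argue directly from the generative semantics: for each drawn edge you exhibit the mechanism that licenses it, and for each absent edge (notably $C\not\to X$ and the lack of a direct $\widetilde{S}$--$Y$ effect) you invoke the definitional restrictions, treating the proposition as a soundness-and-completeness check of the diagram against the operational pipeline. The paper instead runs the IC algorithm of \cite{DBLP:journals/ijon/Shanmugam01} from scratch: it starts with the full variable set, uses conditional-independence tests to build the skeleton (Step~1), orients colliders such as $X\to G\gets C$ from the absence of separating sets (Step~2), and then applies the orientation rules of \cite{DBLP:conf/uai/VermaP92} to direct the remaining edges (Step~3), arriving at exactly the graph in Figure~\ref{fig:SCM}. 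Both arguments ultimately rest on the same semantic facts (e.g., ``$G$ screens $R$ off from everything else'', ``$X$ screens $G$ off from $\widetilde{S},S^{*}$''), but the paper packages them as inputs to a standard causal-discovery procedure, which buys a more systematic derivation and makes the claim ``this is \emph{the} SCM implied by the independence structure'' rather than merely ``this SCM is consistent with the story''. Your approach, on the other hand, is more transparent about which assumptions are doing the work and explicitly flags the scope restrictions (your discussion of $C\not\to X$ and of Definition~\ref{df:ck}) that the IC-based proof leaves implicit in its independence oracle.
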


\begin{proof}
To establish the validity of the proposed SCM in Figure \ref{fig:SCM}, we employ the IC algorithm \cite{DBLP:journals/ijon/Shanmugam01} to construct the SCM from scratch and provide the detailed construction process. IC algorithm is a method for identifying causal relationships from the observed data. Please refer to Chapter 2 of \cite{DBLP:journals/ijon/Shanmugam01} for the details of the IC algorithm. 

The input of the IC algorithm is a set of variables and their distributions. The output is a pattern that represents the underlying causal relationships, which can be a structural causal model. The IC algorithm can be divided into three steps:

\paragraph{Step 1} For each pair of variables $a$ and $b$ in $V$, the IC algorithm searches for a set $S_{ab}$ s.t. the conditional independence relationship $(a \perp b | S_{ab})$ holds. In other words, $a$ and $b$ should be independent given $S_{ab}$. The algorithm constructs an undirected graph $G$ with vertices corresponding to variables in $V$. A pair of vertices $a$ and $b$ are connected with an undirected edge in $G$ if and only if no set $S_{ab}$ can be found that satisfies the conditional independence relationship $(a \perp b | S_{ab})$.

\paragraph{Step 2} For each pair of non-adjacent variables $a$ and $b$ that share a common neighbor $c$, check the existence of $c \in S_{ab}$. If such a $c$ exists, proceed to the next pair; if not, add directed edges from $a$ to $c$ and from $b$ to $c$.

\paragraph{Step 3} In the partially directed graph results, orient as many of the undirected edges as possible subject to the following two conditions: (i) any alternative orientation of an undirected edge would result in a new y-structure, and (ii) any alternative orientation of an undirected edge would result in a directed cycle. 

Accordingly, we employ IC algorithm to provide a step-by-step procedure for constructing our structural causal model.

\begin{figure*}[ht]
	\centering
	\subfigure[]{
		\begin{minipage}{0.15\textwidth} 
			\includegraphics[width=\textwidth]{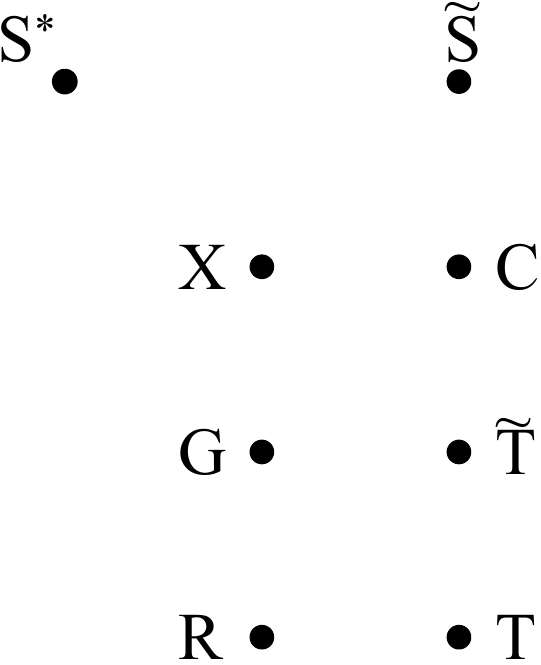} \\
                \label{IC-a}
		\end{minipage}
    }\hspace{10mm}
	\subfigure[]{
		\begin{minipage}{0.15\textwidth} 
			\includegraphics[width=\textwidth]{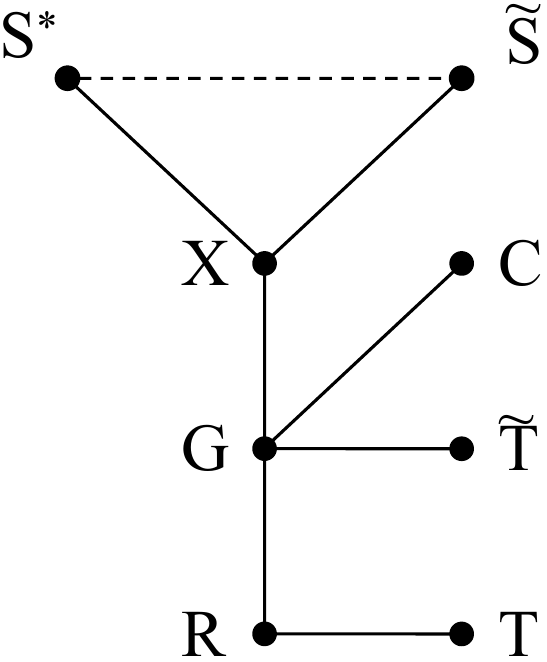} \\
                \label{IC-b}
		\end{minipage}
	}\hspace{10mm}
 	\subfigure[]{
		\begin{minipage}{0.15\textwidth} 
			\includegraphics[width=\textwidth]{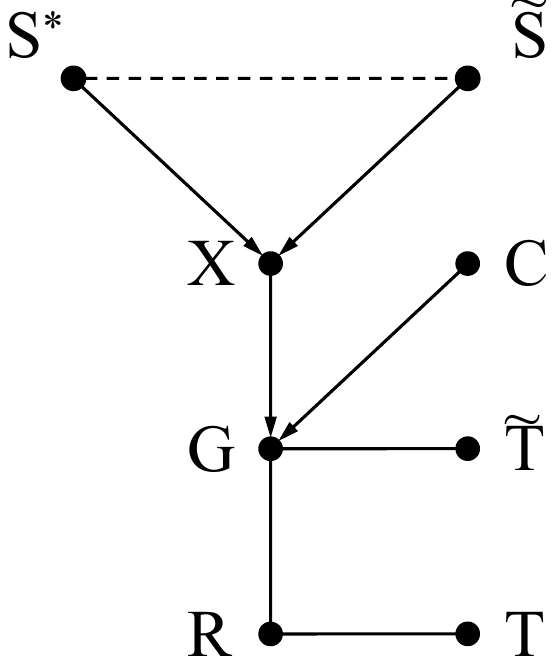} \\
                \label{IC-c}
		\end{minipage}
	}\hspace{10mm}
  	\subfigure[]{
		\begin{minipage}{0.15\textwidth} 
			\includegraphics[width=\textwidth]{IC-d.pdf} \\
                \label{IC-d}
		\end{minipage}
	}

	\caption{Visualization of the reasoning process using IC algorithm.}
	\label{fig:IC}
\end{figure*}

As for step 1, we first represent all variables as nodes in Figure \ref{IC-a}. For each node, we traverse all other nodes to determine whether to establish a connection. For general graph learning scenarios, $R$ represents the model's representation of $G$. Since any other variable can only affect $R$ through $G$, $R$ is conditionally independent of all other variables given $G$. Therefore, $R$ is only connected to $G$. $G$ represents the graph data, which is composed of $X$ and $C$. There is no other variable that can block the relationship between $G$ and $X$, or $G$ and $C$. Therefore, $G$ is connected to both $X$ and $C$. As $X$ blocks the path from $S^*$ and $\widetilde{S}$ to $G$, the corresponding connections do not exist. We define $C$ as a confounder in the graph data, which has no causal relationship with other variables. Therefore, an empty set can make $C$ independent of other variables, except for $G$.

As $\widetilde{S}$ and $S^{*}$ are the factors that decide the form of $X$, each of them is correlated with $X$. Their connections cannot be blocked by any set of variables, and thus we connect all these nodes with $X$. The relationship between $S^{*}$ and $\widetilde{S}$ can not be figured out, therefore we adopt a dashed line to link them. The result is demonstrated in Figure \ref{IC-b}.

We then move on to step 2. Similar to step 1, we process with the traversal analysis starting from $R$. For $R$, since $G \in S_{RG}$ and $G \in S_{RX}$, no edges related to $R$ can be directed. Then, as $S_{XC} = \emptyset$, $X \notin S_{XC}$, we direct edge $X$ to $G$ and $C$ to $G$. As $S^{*}$ and $\widetilde{S}$ is the cause of $X$ by definition, therefore we direct edge $S^{*}$ to $X$ and $\widetilde{S}$ to $X$.


In Step 3, we adopted the rule 1 from \cite{DBLP:conf/uai/VermaP92} for systematizing this step, which states that if $a \to b$ and $a$ and $c$ are not adjacent, then $b \to c$ should be set as the orientation for $b - c$. Based on this, we established the following orientations: $G \to \widetilde{T}, G \to R$. Then, according to the same rule, $R \to T$ should also be set. As for the remaining edge $(\widetilde{S},S^{*})$, since we cannot determine its direction, we represent this edge with a bidirectional dashed line, indicating its directionality is uncertain. The final result is illustrated in Figure \ref{IC-d}, which is identical to the SCM in Figure \ref{fig:SCM}.
\end{proof}


\subsection{Causal Model Based Analysis } 

Based on SCM in Figure \ref{fig:SCM}, we provide theoretical proof to support the intuition that incorporating diminutive causal structure is beneficial for enhancing a model's performance. To achieve such a goal, We propose the following theorem.

\begin{theorem}
\label{th:c}
For a graph representation learning process with a causal structure represented by the SCM in Figure \ref{fig:SCM}, increasing the mutual information $I(R;\widetilde{S})$ between $R$ and $\widetilde{S}$ can decrease the upper bound of the mutual information $I(R;C)$ between $R$ and $C$. Formally:
\begin{gather}
I(R;C) \leq 1-I(R;\widetilde{S}).
\end{gather} 
\end{theorem}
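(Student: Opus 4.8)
The plan is to bound each mutual information term separately and then combine them through a structural constraint read off the SCM in Figure \ref{fig:SCM}. First I would note that $R$ is a function of $G$, and $G$ is built from $X$ and $C$ (via $X \to G \gets C$), so the information that $R$ carries about the world decomposes, up to the processing inequality, into a part inherited from $X$ and a part inherited from $C$. Concretely, I would invoke the data processing inequality along $G \to R$ together with the fact that, conditioned on the generative roles of $X$ and $C$, the ``budget'' of information $R$ can extract is limited; after a suitable normalization of the involved entropies (so that the relevant quantities lie in $[0,1]$), this yields an inequality of the shape $I(R;C) + I(R;\widetilde{S}) \leq 1$, which is exactly the claimed bound rearranged.

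The key steps, in order, would be: (1) use $G \to R$ and the chain rule / DPI to write $I(R;C) \le I(G;C)$ and $I(R;\widetilde{S}) \le I(G;\widetilde{S})$, reducing everything to statements about $G$; (2) exploit the collider structure $X \to G \gets C$ and the path $\widetilde{S} \to X \to G$ to relate $I(G;\widetilde{S})$ to $I(G;X)$ and to argue that $C$ and $\widetilde{S}$ contribute ``disjoint'' shares of $H(G)$, since $C$ is by definition a confounder with no causal link to $X$, $\widetilde{S}$, or $S^{*}$ (so $C \perp \widetilde{S}$, and indeed $C \perp X$); (3) assemble $I(R;C) + I(R;\widetilde{S}) \le I(G;C) + I(G;X) \le H(G)$ and normalize by $H(G)$ (or whatever constant the paper implicitly uses to make the ``$1$'' appear) to conclude $I(R;C) \le 1 - I(R;\widetilde{S})$. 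I would be explicit about which independence relations (all derivable from d-separation in Figure \ref{fig:SCM}, e.g. $C \perp \widetilde{S}$ with empty conditioning set) justify the additivity in step (2).

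The main obstacle I expect is step (2)–(3): getting a clean \emph{additive} split of $H(G)$ between the $C$-share and the $\widetilde{S}$-share. In general $I(G;C) + I(G;X) \le H(G)$ requires $C$ and $X$ to be independent and to jointly determine $G$ — the first holds by the confounder definition, and the second essentially holds because the SCM posits $G$ is a (deterministic, or low-noise) function of $X$ and $C$ with no exogenous noise drawn separately; I would need to state this as an explicit modeling assumption or read it off the structural equations. The appearance of the bare constant $1$ rather than $H(G)$ strongly suggests the authors are tacitly assuming a normalization (unit-entropy, or dividing all mutual informations by $H(G)$, or working with a normalized variant of mutual information); I would make that normalization convention explicit up front, since without it the inequality is not dimensionally sensible. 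Once the normalization and the $C \perp X$ assumption are pinned down, the rest is the routine DPI-plus-chain-rule bookkeeping sketched above.
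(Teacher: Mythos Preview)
Your proposal is correct and follows essentially the same route as the paper: both arguments apply the data processing inequality along $C \to G \to R$ and along $\widetilde{S} \to X \to G \to R$, use the independence $C \perp X$ together with the chain rule to bound $I(G;C) + I(G;X)$ by $I(X,C;G)$, and then invoke a normalization to turn that into the constant $1$. The only cosmetic difference is that the paper writes the intermediate step as the (near-)equality $I(C;G) = 1 - I(X;G)$ by asserting $I(X,C;G) = 1$, whereas you phrase it as the additive bound $I(G;C) + I(G;X) \le H(G)$ and then normalize; your instinct that the ``$1$'' hides an implicit normalization is exactly what the paper does.
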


\textit{Proof of Theorem \ref{th:c}.}

To prove the theorem, we follow \cite{pearl2009causality} and suppose the proposed SCM possesses Markov property. Therefore, according to the SCM in Figure \ref{fig:SCM}, $C$ and $R$ are conditionally independent given $G$, as $G$ blocks any path between $C$ and $R$. Therefore, we could apply the Data Processing Inequality \cite{DBLP:books/wi/01/CT2001} on the path $C \to G \to R$. Formally, we have:
\begin{gather}
I(G;C) \geq I(R;C).
\end{gather} 
According to the Chain Rule for Information \cite{DBLP:books/wi/01/CT2001}, we also have:
\begin{gather}
I(X,C;G) = I(X;G|C) + I(C;G).
\end{gather} 
As $X$ and $C$ are independent, we have:
\begin{gather}
I(X,C;G) = I(X;G) + I(C;G).
\end{gather} 
Therefore:
\begin{gather}
 I(C;G) = I(X,C;G) - I(X;G).
\end{gather} 
As $G$ is the graph data that consist of $X$ and $C$, we have:
\begin{gather}
 I(X,C;G) = 1.
\end{gather} 
Then:
\begin{gather}
 I(C;G) = 1 - I(X;G).
\end{gather} 
Therefore:
\begin{gather}
\label{eq:1-I}
1 - I(X;G) \geq I(R;C).
\end{gather} 
According to the SCM in Figure \ref{fig:SCM}, $\widetilde{S}$ and $R$ are conditionally independent given $G$, we have:
\begin{gather}
\label{eq:ptc_sgsr}
I(\widetilde{S};G) \geq I(\widetilde{S};R).
\end{gather} 
Also:
\begin{gather}
I(\widetilde{S},X;G) = I(\widetilde{S};G) + I(X;G|\widetilde{S})
\end{gather} 
and
\begin{gather}
I(\widetilde{S},X;G) = I(X;G) + I(\widetilde{S};G|X)
\end{gather} 
holds. $\widetilde{S}$ and $G$ is independent given $X$, therefore:
\begin{gather}
I(\widetilde{S};G|X) = 0.
\end{gather} 
Thus:
\begin{gather}
I(\widetilde{S},X;G) = I(\widetilde{S};G) + I(X;G|\widetilde{S}) = I(X;G).
\end{gather} 
As $I(X;G|\widetilde{S}) \geq 0$:
\begin{gather}
\label{eq:ptc_sgxg}
I(\widetilde{S};G) \leq I(X;G).
\end{gather} 
Based on Inequality \ref{eq:ptc_sgsr} and \ref{eq:ptc_sgxg}, we have:
\begin{gather}
\label{eq:ptc_sfinall}
I(\widetilde{S};R) \leq I(X;G).
\end{gather} 
Substituting Inequality \ref{eq:ptc_sfinall} into Inequality \ref{eq:1-I}, we can obtain:
\begin{gather}
I(R;C) \leq 1-I(\widetilde{S};R).
\end{gather} 
Theorem \ref{th:c} is proved.

 Theorem \ref{th:c} establishes a direct relationship between a model's alignment with diminutive causal structure and the reduction of confounding influence. As the model gains an increased alignment, the confounding influence diminishes. Next, we prove the validity of the training objective proposed in Equation \ref{eq:Lo}.

\begin{theorem}
\label{th:ori}
For a certain graph learning process represented by the SCM in Figure \ref{fig:SCM}, $\widetilde{T} \sim p(t), T \sim q(t), t\in\mathcal{T}$, if the high-level causal models $r(\cdot)$ is effective enough such that $I(\widetilde{T}, \widetilde{S}) = I(G, \widetilde{S})$, then $I(R, \widetilde{S})$ is maximized if $p(t|\widetilde{s}) = q(t|\widetilde{s})$.
\end{theorem}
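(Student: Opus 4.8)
The plan is to connect the distributional condition $p(t\mid\widetilde{s})=q(t\mid\widetilde{s})$ to the mutual information $I(R;\widetilde{S})$ by inserting $\widetilde{T}$ and $T$ as intermediate variables and exploiting the structure of the SCM in Figure \ref{fig:SCM}. First I would observe that since $T$ is computed from $R$ (the link $R\to T$), the Data Processing Inequality gives $I(R;\widetilde{S})\geq I(T;\widetilde{S})$, and similarly since $\widetilde{T}$ is computed from $G$ we have $I(G;\widetilde{S})\geq I(\widetilde{T};\widetilde{S})$. Combined with the effectiveness hypothesis $I(\widetilde{T};\widetilde{S})=I(G;\widetilde{S})$ and the fact (already derived en route to Theorem \ref{th:c}, Inequality \ref{eq:ptc_sgsr}) that $I(G;\widetilde{S})\geq I(R;\widetilde{S})$, this pins down a chain of inequalities whose two ends are $I(\widetilde{T};\widetilde{S})$ and $I(T;\widetilde{S})$, with $I(R;\widetilde{S})$ sandwiched in the middle. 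Hence maximizing $I(R;\widetilde{S})$ amounts to pushing $I(T;\widetilde{S})$ up to meet the ceiling set by $I(\widetilde{T};\widetilde{S})$.

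Next I would express the gap $I(\widetilde{T};\widetilde{S})-I(T;\widetilde{S})$ in terms of the conditional distributions. Writing the mutual informations out, $I(\widetilde{T};\widetilde{S})=\mathbb{E}_{\widetilde{s}}\big[KL(p(t\mid\widetilde{s})\,\|\,p(t))\big]$ and $I(T;\widetilde{S})=\mathbb{E}_{\widetilde{s}}\big[KL(q(t\mid\widetilde{s})\,\|\,q(t))\big]$, where $p(t),q(t)$ are the corresponding marginals over $\mathcal{T}$. The plan is to show that when $p(t\mid\widetilde{s})=q(t\mid\widetilde{s})$ for all $\widetilde{s}$, the marginals coincide too, so the two mutual informations are literally equal, and $I(T;\widetilde{S})$ attains its maximal possible value $I(\widetilde{T};\widetilde{S})$; since the sandwich above then forces $I(R;\widetilde{S})=I(\widetilde{T};\widetilde{S})=I(G;\widetilde{S})$, which is the largest value $I(R;\widetilde{S})$ can take, the maximum is achieved. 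Conversely, one should note that any deviation $p(t\mid\widetilde{s})\neq q(t\mid\widetilde{s})$ cannot increase $I(T;\widetilde{S})$ beyond $I(\widetilde{T};\widetilde{S})$, so the condition is genuinely an optimizer rather than merely a stationary point.

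The main obstacle I anticipate is justifying the step $I(T;\widetilde{S})\leq I(\widetilde{T};\widetilde{S})$ cleanly, i.e.\ arguing that no choice of $f(\cdot)$ (hence no choice of the conditional $q(t\mid\widetilde{s})$) can make $T$ more informative about $\widetilde{S}$ than $\widetilde{T}$ is. This needs the SCM structure: $T$ depends on $\widetilde{S}$ only through $R$, which depends on $\widetilde{S}$ only through $G$, whereas $\widetilde{T}$ already saturates the information in $G$ about $\widetilde{S}$ by hypothesis; chaining the Data Processing Inequality along $\widetilde{S}\to G\to R\to T$ yields $I(T;\widetilde{S})\leq I(G;\widetilde{S})=I(\widetilde{T};\widetilde{S})$. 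A secondary subtlety is handling the continuous value space $\mathcal{T}$ with densities $p(t),q(t)$ rather than discrete distributions, but the DPI and the variational characterization of mutual information hold in that generality, so this is a technical rather than conceptual hurdle. I would close by remarking that equality in the sandwich is exactly the ``$f(\cdot)$ has learned the knowledge of $\mathcal{M}(\cdot)$'' statement motivated before Equation \ref{eq:Lo}, which is what justifies taking $\mathcal{L}_{o}$ as the training objective.
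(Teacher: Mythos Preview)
Your proposal is correct and follows essentially the same route as the paper: both establish the sandwich $I(\widetilde{S};\widetilde{T})\geq I(\widetilde{S};R)\geq I(\widetilde{S};T)$ via the Data Processing Inequality along the chain $\widetilde{S}\to X\to G\to R\to T$ together with the hypothesis $I(\widetilde{T};\widetilde{S})=I(G;\widetilde{S})$, and both then show that $p(t\mid\widetilde{s})=q(t\mid\widetilde{s})$ forces $I(\widetilde{S};T)=I(\widetilde{S};\widetilde{T})$, collapsing the sandwich. The only cosmetic difference is that the paper verifies the last equality by computing $H(T)=H(\widetilde{T})$ and $H(T\mid\widetilde{S})=H(\widetilde{T}\mid\widetilde{S})$ directly, whereas you phrase it via the expected-KL representation of mutual information; these are equivalent one-line observations.
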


\textit{Proof of Theorem \ref{th:ori}.}

We begin with calculating the boundaries of $I(R,\widetilde{S})$. According to the SCM in Figure \ref{fig:SCM}, $\widetilde{S}$ and $G$ are conditionally independent given $X$, as $X$ block any path between $\widetilde{S}$ and $G$. Therefore, we could apply the Data Processing Inequality \cite{DBLP:books/wi/01/CT2001} on the path $\widetilde{S} \to X \to G$. Formally, we have:
\begin{gather}
I(\widetilde{S}; X) \geq I(\widetilde{S};G).
\end{gather} 
Likewise, we have:
\begin{gather}
I(\widetilde{S}; G) \geq I(\widetilde{S};R),
\end{gather} 
and:
\begin{gather}
I(\widetilde{S}; R) \geq I(\widetilde{S};T).
\end{gather} 
According to the assumptions:
\begin{gather}
I(\widetilde{T}; \widetilde{S}) = I(G;\widetilde{S}),
\end{gather} 
we have:
\begin{gather}
I(\widetilde{S} ; \widetilde{T}) \geq I(\widetilde{S};R) \geq I(\widetilde{S};T).
\end{gather} 
So far, we can acquire the upper and lower bounds of $I(\widetilde{S};R)$. Moreover, since the data set and $\mathcal{M}$ are determined in advance in the learning task, the only thing that can be changed for this system is the parameters of the neural network model. Therefore, the upper bound $I(\widetilde{S} ; \widetilde{T})$ holds a fixed value. If we can make the lower bound $I(\widetilde{S};T)$ equal to the upper bound $I(\widetilde{S} ; \widetilde{T})$, then we have $I(\widetilde{S};R)$ reached the maximum. Next, we will proof if $p(t|\widetilde{s}) = q(t|\widetilde{s})$, then $I(\widetilde{S} ; \widetilde{T}) = I(\widetilde{S} ; T)$.

When $p(t|\widetilde{s}) = q(t|\widetilde{s})$, then: 
\begin{align}
H(T|\widetilde{S}) &= - \sum_{\widetilde{s} \in \mathcal{\widetilde{S}}} p_{\widetilde{S}}(\widetilde{s})\sum_{ t \in \mathcal{T}} p(t| \widetilde{s}) log\Big( p(t| \widetilde{s})\Big) \nonumber\\
&=  - \sum_{\widetilde{s} \in \mathcal{\widetilde{S}}} p_{\widetilde{S}}(\widetilde{s})\sum_{ t \in \mathcal{T}} q(t| \widetilde{s}) log\Big( q(t| \widetilde{s})\Big) \nonumber\\
&= H(\widetilde{T}|\widetilde{S}),
\label{TSeq} 
\end{align}
where $\widetilde{S} \sim p_{\widetilde{S}}(\widetilde{s}), \widetilde{s} \in \mathcal{\widetilde{S}}$.
We also have:
\begin{align}
H(T) &=  - \sum_{t \in \mathcal{T}} p(t) log\Big( p(t) \Big) \nonumber\\
&=  - \sum_{t \in \mathcal{T}} \sum_{\widetilde{s} \in \mathcal{\widetilde{S}}}p_{\widetilde{S}}(\widetilde{s})p(t|\widetilde{s}) log\Big( \sum_{\widetilde{s} \in \mathcal{\widetilde{S}}}p_{\widetilde{S}}(\widetilde{s})p(t|\widetilde{s}) \Big) \nonumber\\
&=  - \sum_{t \in \mathcal{T}} \sum_{\widetilde{s} \in \mathcal{\widetilde{S}}}p_{\widetilde{S}}(\widetilde{s})q(t|\widetilde{s}) log\Big( \sum_{\widetilde{s} \in \mathcal{\widetilde{S}}}p_{\widetilde{S}}(\widetilde{s})q(t|\widetilde{s}) \Big) \nonumber\\
&=    - \sum_{t \in \mathcal{T}} q(t) log\Big( q(t) \Big) \nonumber\\
&= H(\widetilde{T})
\label{Teq} 
\end{align}
According to the definition of mutual information \cite{DBLP:books/wi/01/CT2001}, we have:
\begin{gather}
I(\widetilde{S} ; \widetilde{T}) = H(\widetilde{T}) - H(\widetilde{T} | \widetilde{S}), \nonumber\\
I(T ; \widetilde{S}) = H(T) - H(T | \widetilde{S}).
\label{IHH}
\end{gather} 
Based on equation \ref{TSeq}, \ref{Teq} and \ref{IHH}, we have:
\begin{gather}
I(\widetilde{S} ; \widetilde{T}) = I(\widetilde{S} ; T).
\end{gather} 
Based on the discussions above, $I(\widetilde{S};R)$ reached the maximum. Theorem \ref{th:ori} is proved.

According to Theorem \ref{th:ori}, if Equation \ref{eq:Lo} holds, $I(R, \widetilde{S})$ will be maximized, which indicates that the model has learned the maximum amount of knowledge about $\widetilde{S}$. With Theorem \ref{th:ori}, we present a corollary to substantiate the validity of $\mathcal{L}_{c}$.

\begin{corollary}
\label{corollary:Lc2}
For a specific graph dataset $G$, if the training samples in $G$ sufficiently cover all embodiments corresponding to $\widetilde{S}$, and the $\mathcal{L}_{c}$ which is defined by Equation \ref{eq:Lc2} reaches zero, then the maximized value of $I(R, \widetilde{S})$ stated in Theorem \ref{th:ori} can be achieved.
\end{corollary}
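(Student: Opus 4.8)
The plan is to show that the vanishing of $\mathcal{L}_{c}$ forces the pointwise identity $p(t\mid\widetilde{s})=q(t\mid\widetilde{s})$ for every $t\in\mathcal{T}$ and every $\widetilde{s}$ in the support of $\widetilde{S}$, which is precisely the premise of Theorem \ref{th:ori}; the conclusion then follows by invoking that theorem.

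First I would unpack $\mathcal{L}_{c}=0$. Each summand $\mathcal{D}_{KL}(\mathcal{M}(G_i),f(G_i))$ is a discrete KL divergence, hence non-negative by Gibbs' inequality, so a vanishing sum forces every summand to vanish; and a KL divergence equals zero if and only if its two arguments are the same distribution. Therefore $\mathcal{M}(G_i)=f(G_i)$ as distributions over $\mathcal{T}$ for each $i=1,\dots,N$. Since, by construction, $\mathcal{M}(\cdot)$ and $f(\cdot)$ are set up to output the probability values of $p(t\mid\widetilde{s})$ and $q(t\mid\widetilde{s})$ respectively, writing $\widetilde{s}_i$ for the embodiment of $\widetilde{S}$ realized in the sample $G_i$, this reads $p(t\mid\widetilde{s}_i)=q(t\mid\widetilde{s}_i)$ for all $t\in\mathcal{T}$ and all $i$.

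Next I would invoke the coverage hypothesis: the assumption that $\{G_i\}_{i=1}^{N}$ sufficiently covers all embodiments of $\widetilde{S}$ means exactly that $\{\widetilde{s}_i : 1\le i\le N\}$ exhausts the support of $\widetilde{S}$, so the finite collection of equalities just obtained upgrades to $p(t\mid\widetilde{s})=q(t\mid\widetilde{s})$ for every $t\in\mathcal{T}$ and every such $\widetilde{s}$. This is precisely Equation \ref{eq:Lo} reaching zero, so Theorem \ref{th:ori} applies and $I(R;\widetilde{S})$ attains the maximal value stated there, carrying over the standing assumption of that theorem that the high-level causal model is effective enough, i.e. $I(\widetilde{T};\widetilde{S})=I(G;\widetilde{S})$.

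The delicate step is the passage from the finite-sample condition $\mathcal{L}_{c}=0$ to the population-level functional identity required by Theorem \ref{th:ori}: this is exactly what the coverage hypothesis is there to supply, and I would spell it out carefully — every value $\widetilde{s}$ in the support of $\widetilde{S}$ is witnessed by at least one training graph, and any two training graphs sharing the same $\widetilde{s}$ yield mutually consistent predictions — so that no portion of the support of $\widetilde{S}$ is left uncontrolled. The remaining ingredients (non-negativity and the equality case of KL divergence, and the direct citation of Theorem \ref{th:ori}) are routine.
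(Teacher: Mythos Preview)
Your proposal is correct and follows essentially the same route as the paper: use non-negativity of KL to deduce that each summand in $\mathcal{L}_{c}$ vanishes, identify the per-sample outputs with $p(t\mid\widetilde{s}_i)$ and $q(t\mid\widetilde{s}_i)$, invoke the coverage assumption to pass to all $\widetilde{s}$, and then apply Theorem~\ref{th:ori}. Your write-up is in fact tidier than the paper's, which introduces some ad hoc notation ($p_i$, $\hat q_i$, $r^P$, $r^E$) without prior definition; your explicit mention of the consistency requirement for graphs sharing the same $\widetilde{s}$ and of the standing hypothesis $I(\widetilde{T};\widetilde{S})=I(G;\widetilde{S})$ are welcome clarifications that the paper leaves implicit.
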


\textit{Proof of Corollary \ref{corollary:Lc2}.} 

As in Equation \ref{eq:Lo}, for each graph $G_{i}$, $\widetilde{S}$ holds a fixed value $\widetilde{s}_{i}$. Therefore, we have:
\begin{gather}
p_{i}(t) = p_{i}(t|\widetilde{s}_{i}) = p_{i}(t|\widetilde{s}),
\end{gather} 
and 
\begin{gather}
\hat{q}_{i}(t) = \hat{q}_{i}(t|\widetilde{s}_{i}) = \hat{q}_{i}(t|\widetilde{s}).
\end{gather} 
Therefore, we have:
\begin{gather}
\mathcal{L}_{c} = \sum_{i}^{N} KL\Big(p_{i}(t|\widetilde{s}) || \hat{q}_{i}(t|\widetilde{s})\Big) = \sum_{i}^{N} KL\Big(p_{i}(t) || \hat{q}_{i}(t)\Big)
\end{gather} 
As $r^{P}(r^{E}(G_{i}))$ can predict $p_{i}(t)$. $f^{P}(f^{E}(G_{i}))$ output a estimation $\hat{q}_{i}(t)$ of ${q}_{i}(t)$. Therefore:

\begin{gather}
\mathcal{L}_{c} = \sum_{i}^{N} KL\Big(p_{i}(t) || \hat{q}_{i}(t)\Big) \nonumber\\ = \sum_{i=1}^{N} \mathcal{D}_{KL}\Big(r^{P}(r^{E}(G_i)), f^{P}(f^{E}(G_i))\Big)
\end{gather} 

Then, if:

\begin{gather}
\mathcal{L}_{c} =  \sum_{i=1}^{N} \mathcal{D}_{KL}\Big(r^{P}(r^{E}(G_i)), f^{P}(f^{E}(G_i))\Big) = 0,
\end{gather} 

we have:

\begin{gather}
\sum_{i}^{N} KL\Big(p_{i}(t) || \hat{q}_{i}(t)\Big) = \sum_{i}^{N} KL\Big(p_{i}(t|\widetilde{s}) || \hat{q}_{i}(t|\widetilde{s})\Big) = 0.
\end{gather} 

Further more, the training samples in $G$ sufficiently cover all embodiments corresponding to $\widetilde{S}$, we have:

\begin{gather}
KL\Big(p(t|\widetilde{s}) || q(t|\widetilde{s})\Big) = 0.
\end{gather} 

Then $p(t|\widetilde{s})$ equals $q(t|\widetilde{s})$, according to Theorem \ref{th:ori}, $I(R;\widetilde{S})$ reaches maximization. Corollary \ref{corollary:Lc2} is proved.

\section{Experiments}

\begin{table}[h] \scriptsize
	\centering
 	\caption{Summary of datasets used for experiment with diminutive causal structure from the topological domain.}
    \vskip 0.1in
	\begin{tabular}{lccccccc}
		\toprule
		\multirow{2}*{Name}      &  \multirow{2}*{Graphs} & Average & \multirow{2}*{Classes} & \multirow{2}*{Task Type} & \multirow{2}*{Metric}       \\
  		     &  & Nodes &  & &       \\
		\midrule
		Spurious-Motif &18,000 &46.6 &3 &Classification &ACC \\
		Motif-Variant  &18,000 &48.9 &3 &Classification &ACC \\
		\bottomrule
	\end{tabular}
	\label{tab:datasets1}
\end{table}

\begin{table*}[ht]\scriptsize
	\setlength{\tabcolsep}{8pt}
 	\caption{Performance of graph classification accuracy in Spurious-Motif. Spurious-Motif (ID) denotes the ID version dataset. Bias represents the degree of distribution shift between the training set and the test set. Some of the results are cited from \cite{DBLP:conf/iclr/WuWZ0C22}. The best records are highlighted in \textbf{bold}, and \underline{underline} denotes the second-best result.}
   \vskip 0.1in
	\centering
		\begin{tabular}{l|ccccccc|c}
			\hline\rule{0pt}{8pt}

			\multirow{2}*{Method}   & \multicolumn{7}{c}{Spurious-Motif } \vline&  Spurious-Motif   \\
			\cline{2-8}\rule{0pt}{8pt} 
			& Balanced & bias = 0.4 & bias = 0.5 & bias = 0.6 & bias = 0.7 & bias = 0.8 & bias = 0.9 & (ID) \\ 
			\hline\rule{0pt}{8pt}
			\text{Local Extremum GNN \cite{DBLP:conf/aaai/RanjanST20}}  & 42.99$\pm$1.93 & 40.36$\pm$0.95 & 39.69$\pm$1.73 & 39.13$\pm$1.38 & 38.93$\pm$1.74 & 36.76$\pm$1.59 & 33.61$\pm$1.02 & {92.91$\pm$1.98}\\ \rule{0pt}{8pt}
			\text{GAT \cite{velivckovic2017graph}} & 43.07$\pm$2.55 & 41.82$\pm$1.96 & 39.42$\pm$1.50 & 37.98$\pm$1.63 & 37.41$\pm$0.86 & 35.68$\pm$0.54 & 33.46$\pm$0.43 &  {91.31$\pm$2.28}\\  \rule{0pt}{8pt}
			\text{Top-k Pool \cite{gao2019graph}}  &  43.43$\pm$8.79 & 42.85$\pm$7.34 & 41.21$\pm$7.05 & 40.90$\pm$6.95 &  40.27$\pm$7.12 & 36.84$\pm$3.48 & 33.60$\pm$0.91 & {90.83$\pm$3.04}\\  \rule{0pt}{8pt}
            \text{GSN \cite{DBLP:journals/pami/BouritsasFZB23}} &   43.18$\pm$5.65 & 38.53$\pm$3.49 &  34.67$\pm$1.21 & 34.56$\pm$0.38 &  34.03$\pm$1.69 & 33.48$\pm$1.57 &  32.60$\pm$1.75 & {92.81$\pm$1.10}\\  \rule{0pt}{8pt}
			\text{Group DRO \cite{sagawa2019distributionally}} &  41.51$\pm$1.11 & 40.48$\pm$1.22 & 39.38$\pm$0.93 & 39.68$\pm$1.56 & 39.32$\pm$2.23 & 35.15$\pm$1.19 & 33.90$\pm$0.52 & {92.18$\pm$1.12}\\  \rule{0pt}{8pt}
			\text{IRM \cite{arjovsky2019invariant}}   & 42.26$\pm$2.69 & 41.94$\pm$1.90 &  41.30$\pm$1.28 & 40.96$\pm$1.65 & 40.16$\pm$1.70 & 38.53$\pm$1.86 &  35.12$\pm$2.71 & {91.12$\pm$1.56}\\  \rule{0pt}{8pt}
			\text{V-REx \cite{krueger2021out}}  & 42.83$\pm$1.59 & 40.36$\pm$1.99 &  39.43$\pm$2.69 & 39.33$\pm$1.70 & 39.08$\pm$1.56 & 36.85$\pm$1.86  &34.81$\pm$2.04  & {91.08$\pm$1.85}\\  \rule{0pt}{8pt}
            \text{DIR \cite{DBLP:conf/iclr/WuWZ0C22}}    & 43.53$\pm$3.38 & 42.99$\pm$3.35 & 41.45$\pm$2.12 & 41.37$\pm$2.08 & 41.03$\pm$1.53 & 40.26$\pm$1.36 & 39.20$\pm$1.94 & \underline{93.02$\pm$1.89}\\ \rule{0pt}{8pt}
            \text{DISC}	\cite{fan2022debiasing} & 44.93$\pm$2.18 & 44.49$\pm$3.35 & 43.53$\pm$1.93 & 42.78$\pm$2.01 & \underline{42.40$\pm$1.53} & \underline{42.06$\pm$1.68} & \underline{41.35$\pm$1.23} & 92.79$\pm$1.73\\ \rule{0pt}{8pt}
            \text{RCGRL \cite{DBLP:conf/aaai/GaoLQSXZ023}}    & 44.03$\pm$1.98 & 42.96$\pm$1.63 & 42.50$\pm$1.53 & 42.41$\pm$1.68 & 42.02$\pm$1.58 & 41.62$\pm$1.35 & 41.03$\pm$1.12 & 92.65$\pm$1.67\\  
      	    \hline\rule{0pt}{8pt}
            DCS-Only-D  & 42.38$\pm$1.93  &  40.53$\pm$1.36 &  39.93$\pm$1.89 & 39.43$\pm$1.31  & 39.01$\pm$1.22 &  35.83$\pm$1.06  &   33.32$\pm$0.63 & 91.93$\pm$0.95 \\  \rule{0pt}{8pt}
            DCS-Only-L  & 42.30$\pm$1.63 & 39.93$\pm$1.57  &  38.89$\pm$1.39 & 38.58$\pm$1.74  &  37.41$\pm$1.69 &  35.96$\pm$0.96 &  33.26$\pm$1.20  & 92.02$\pm$1.68 \\  
            \rowcolor{orange!20}\rule{0pt}{8pt}            
            Average  & 42.35 & 40.23  & 39.41  &  39.01  & 38.21 & 35.90 & 33.29 & 91.98  \\  
			\hline\rule{0pt}{8pt}
            DCSGL-T  & 44.38$\pm$2.20 & 43.68$\pm$1.65 & 42.94$\pm$1.98 & 42.36$\pm$1.67 & 41.35$\pm$1.58  & 40.93$\pm$1.68  &  39.06$\pm$1.97  & 92.03$\pm$1.32  \\ \rule{0pt}{8pt}
            DCSGL-A & \underline{45.08$\pm$1.36} & \underline{44.68$\pm$1.54}  &   \underline{43.65$\pm$1.84} & \underline{42.80$\pm$1.79} &  41.98$\pm$1.54 & 41.30$\pm$1.68 &  40.13$\pm$1.89   & 92.87$\pm$1.28 \\  \rule{0pt}{8pt}
			\textbf{DCSGL}   & \bf{47.32$\pm$1.89} &   \bf{46.05$\pm$1.85} & \bf{45.53$\pm$1.64} & \bf{44.01$\pm$1.78} & \bf{43.91$\pm$1.89} & \bf{43.51$\pm$1.99} & \bf{41.68$\pm$1.67}  & \bf{95.38$\pm$1.28}\\ 
			\hline
		\end{tabular}

	\label{tab:sm}
\end{table*}

\begin{table*}[ht]\scriptsize
    \vskip 0.1in
	\setlength{\tabcolsep}{8pt}
	\centering
 	\caption{Performance of graph classification accuracy in Motif-Variant. Motif-Variant (ID) denotes the ID version dataset. Bias represents the degree of distribution change between the training set and the test set. The best records are highlighted in \textbf{bold}, and \underline{underline} denotes the second-best result.}
    \vskip 0.1in
		\begin{tabular}{l|ccccccc|c}
			\hline\rule{0pt}{8pt}
			\multirow{2}*{Method}   & \multicolumn{7}{c}{Motif-Variant }   \vline & Motif-Variant\\
			\cline{2-8}\rule{0pt}{8pt} 
			& Balanced & bias = 0.4 & bias = 0.5 & bias = 0.6 & bias = 0.7 & bias = 0.8 & bias = 0.9 & (ID)\\ 	
			\hline\rule{0pt}{8pt}
			\text{Local Extremum GNN \cite{DBLP:conf/aaai/RanjanST20}}   & 48.18$\pm$3.46 & 46.98$\pm$2.98 & 46.38$\pm$2.52 & 45.98$\pm$2.36 & 45.78$\pm$2.81 & 42.84$\pm$2.20 & 42.31$\pm$2.13 & {94.78$\pm$1.07}\\\rule{0pt}{8pt} 
			\text{GAT \cite{velivckovic2017graph}}  & 49.13$\pm$2.96 & 48.56$\pm$2.62 & 47.78$\pm$2.12 & 46.35$\pm$1.85 & 45.63$\pm$1.93 & 43.06$\pm$1.50 & 41.48$\pm$0.85 & {93.32$\pm$2.60}\\\rule{0pt}{8pt} 
			\text{Top-k Pool \cite{gao2019graph}} & 48.56$\pm$7.10 & 47.35$\pm$7.28 & 46.08$\pm$7.93 & 45.55$\pm$1.68 & 44.37$\pm$8.07 &  43.44$\pm$1.65 & 42.10$\pm$6.13 & {93.17$\pm$3.21} \\\rule{0pt}{8pt} 
            \text{GSN \cite{DBLP:journals/pami/BouritsasFZB23}} & 47.05$\pm$6.03 & 45.87$\pm$4.33 & 44.08$\pm$2.32 & 44.15$\pm$3.65 & 43.15$\pm$1.68 & 41.73$\pm$1.55 & 40.19$\pm$1.93 & {93.90$\pm$2.11}\\\rule{0pt}{8pt} 
			\text{Group DRO \cite{sagawa2019distributionally}}  & 44.03$\pm$1.37 & 43.35$\pm$1.46 & 42.06$\pm$1.21 & 41.48$\pm$1.74 & 41.15$\pm$1.56 & 40.11$\pm$1.82 & 39.90$\pm$1.02 & {92.54$\pm$1.31}\\\rule{0pt}{8pt} 
			\text{IRM \cite{arjovsky2019invariant}}   & 48.13$\pm$2.86 & 46.40$\pm$2.43 & 44.30$\pm$1.49 & 43.61$\pm$1.81 & 42.01$\pm$2.03 & 41.55$\pm$2.50 & 40.18$\pm$2.33 & {93.17$\pm$1.21}\\\rule{0pt}{8pt} 
			\text{V-REx \cite{krueger2021out}}  & 49.76$\pm$1.68 & 48.80$\pm$1.69 & 46.83$\pm$2.36 & 44.34$\pm$1.34 & 43.12$\pm$1.50 & 42.67$\pm$1.50 & 42.37$\pm$1.99  & {93.13$\pm$1.25}\\\rule{0pt}{8pt} 
			\text{DIR \cite{DBLP:conf/iclr/WuWZ0C22}}   & 49.66$\pm$2.85 & 48.91$\pm$2.31 & 47.76$\pm$2.73 & 45.31$\pm$1.60 & 44.80$\pm$1.32 & 44.12$\pm$1.96 & 42.90$\pm$1.68 & {94.46$\pm$1.57} \\\rule{0pt}{8pt} 
      		\text{DISC \cite{fan2022debiasing}} & \underline{51.38$\pm$1.97} & \underline{49.65$\pm$1.98} & \underline{49.08$\pm$1.65} & \underline{48.36$\pm$1.76} & 46.76$\pm$1.89 & 45.56$\pm$1.55 & 43.96$\pm$1.68  & \bf{95.98$\pm$1.70} \\\rule{0pt}{8pt} 
   			\text{RCGRL \cite{DBLP:conf/aaai/GaoLQSXZ023}} & 50.01$\pm$2.79   & 49.30$\pm$2.20 & 48.93$\pm$1.65 & 48.12$\pm$1.80 & \underline{47.13$\pm$1.95} & \underline{46.93$\pm$1.73} & 44.33$\pm$1.55  & \bf{95.98$\pm$1.70} \\
			\hline\rule{0pt}{8pt}
            DCS-Only-D & 48.03$\pm$1.84  & 47.57$\pm$1.57 & 45.41$\pm$1.77 & 44.93$\pm$1.94 & 44.52$\pm$2.13 & 43.66$\pm$1.48 & 42.78$\pm$1.95 & 92.79$\pm$1.88 \\\rule{0pt}{8pt} 
            DCS-Only-L & 47.98$\pm$1.93 & 47.20$\pm$1.99 & 44.70$\pm$1.81 & 43.79$\pm$1.49 & 43.65$\pm$1.67 & 42.91$\pm$1.45 & 41.98$\pm$1.64  & 93.69$\pm$1.93 \\
            \rowcolor{orange!20}\rule{0pt}{8pt}            
            Average & 48.01 & 47.39 & 45.06 & 44.36 & 44.09 & 43.29 & 42.38 & 93.24 \\
            \hline\rule{0pt}{8pt}
            DCSGL-T  & 49.98$\pm$1.71 &  48.84$\pm$1.90 &  \underline{48.95$\pm$1.91} & 47.30$\pm$1.58 & 46.23$\pm$1.79 & 45.87$\pm$1.65 & 45.13$\pm$1.75  & 94.05$\pm$0.87 \\\rule{0pt}{8pt} 
    		DCSGL-A  & 49.06$\pm$1.74 & 48.33$\pm$1.70 & 47.45$\pm$1.23 & 47.35$\pm$1.03 & 46.06$\pm$1.78 & 46.15$\pm$1.69 & \underline{45.89$\pm$1.75}  & 93.88$\pm$0.98 \\\rule{0pt}{8pt} 
			\textbf{DCSGL}  & \bf{52.16$\pm$1.88} & \bf{50.09$\pm$1.63} & \bf{49.00$\pm$1.30} & \bf{48.55$\pm$1.85} & \bf{48.33$\pm$1.93} & \bf{48.13$\pm$1.78} & \bf{47.06$\pm$1.87} & \underline{95.53$\pm$0.93} \\ 
			\hline
		\end{tabular}

	\label{tab:smv}
\end{table*}

\begin{table}[ht]\scriptsize
    \vskip 0.1in
	\setlength{\tabcolsep}{5pt}
	\centering
 	\caption{Performance of classification accuracy in node classification tasks. Some of the baselines are removed as they are specifically designed to address graph classification tasks. Mixed-N denotes the graph data mixed with motif from Spurious-Motif and Motif-Variant. The best records are highlighted in \textbf{bold}, and \underline{underline} denotes the second-best result. }
    \vskip 0.1in
		\begin{tabular}{l|ccc}
			\hline\rule{0pt}{8pt}
			Method  & Spurious-Motif-N & Motif-Variant-N & Mixed-N \\
			\hline\rule{0pt}{8pt}
			\text{Local Extremum GNN \cite{DBLP:conf/aaai/RanjanST20}}   & 76.24$\pm$1.70 & 75.13$\pm$1.98 & 74.58$\pm$1.63 \\\rule{0pt}{8pt} 
			\text{GAT \cite{velivckovic2017graph}}  & 77.31$\pm$1.63 & 73.62$\pm$1.53 & 73.78$\pm$1.32 \\\rule{0pt}{8pt} 
			\text{Group DRO \cite{sagawa2019distributionally}}  & 75.01$\pm$1.89 & 73.10$\pm$1.55 & 74.03$\pm$1.39 \\\rule{0pt}{8pt} 
			\text{IRM \cite{arjovsky2019invariant}}   & 74.86$\pm$1.91 & 75.40$\pm$1.89 & 72.30$\pm$1.82\\\rule{0pt}{8pt} 
			\text{V-REx \cite{krueger2021out}}  & 76.67$\pm$1.46 & 75.58$\pm$1.99 & 74.63$\pm$1.47 \\\rule{0pt}{8pt} 
			\text{DIR \cite{DBLP:conf/iclr/WuWZ0C22}}   & 76.50$\pm$1.68 & 75.66$\pm$2.01 & 76.30$\pm$1.53 \\\rule{0pt}{8pt} 
   			\text{RCGRL \cite{DBLP:conf/aaai/GaoLQSXZ023}} &75.95$\pm$1.50  & 76.85$\pm$1.43 & 74.90$\pm$1.65 \\
			\hline\rule{0pt}{8pt}
            DCS-Only-D & 76.12$\pm$1.69  & 75.65$\pm$1.71 & 75.68$\pm$1.70 \\\rule{0pt}{8pt} 
            DCS-Only-L & 76.95$\pm$1.87 & 75.38$\pm$1.65 & 74.80$\pm$1.79 \\
            \rowcolor{orange!20}\rule{0pt}{8pt}            
            Average & 76.54 & 75.52 & 75.24 \\
            \hline\rule{0pt}{8pt}
            DCSGL-T  & \underline{78.37$\pm$1.86} &  \underline{78.91$\pm$1.49} &  77.31$\pm$1.09 \\\rule{0pt}{8pt} 
    		DCSGL-A  & 78.01$\pm$1.50 & 77.40$\pm$1.66 & \underline{77.38$\pm$1.35}  \\\rule{0pt}{8pt} 
			\textbf{DCSGL}  & \bf{80.16$\pm$1.73} & \bf{80.01$\pm$1.70} & \bf{78.08$\pm$1.54} \\ 
			\hline
		\end{tabular}

	\label{tab:sn}
\end{table}

In this section, we conduct a comprehensive analysis of the performance of our approach through a series of experiments. We extract distinct diminutive causal structures in two prevalent domains of graph representation learning: graph topology analysis and text graph analysis. Validation of our method is carried out across multiple datasets within each domain. Additionally, we perform a series of further analyses to delve deeper into the intrinsic mechanisms of DCSGL.

\subsection{Experiments with Diminutive Causal Structure from Topological Domain}
For the study of graph representation learning, the topological structure of the graph is particularly crucial, encompassing a wealth of essential information \cite{DBLP:conf/nips/YangWY22, DBLP:conf/icml/LiWLCX22}. In this experiment, we undertake diminutive causal structure discovery based on knowledge pertaining to graph topological structures and construct $\mathcal{M}(\cdot)$ and $\mathcal{M}^{\gamma}(\cdot)$. Subsequently, a series of experiments were conducted on artificially synthesized datasets for validation.

\begin{figure}[h]
    \centering
    \includegraphics[width=0.35\textwidth]{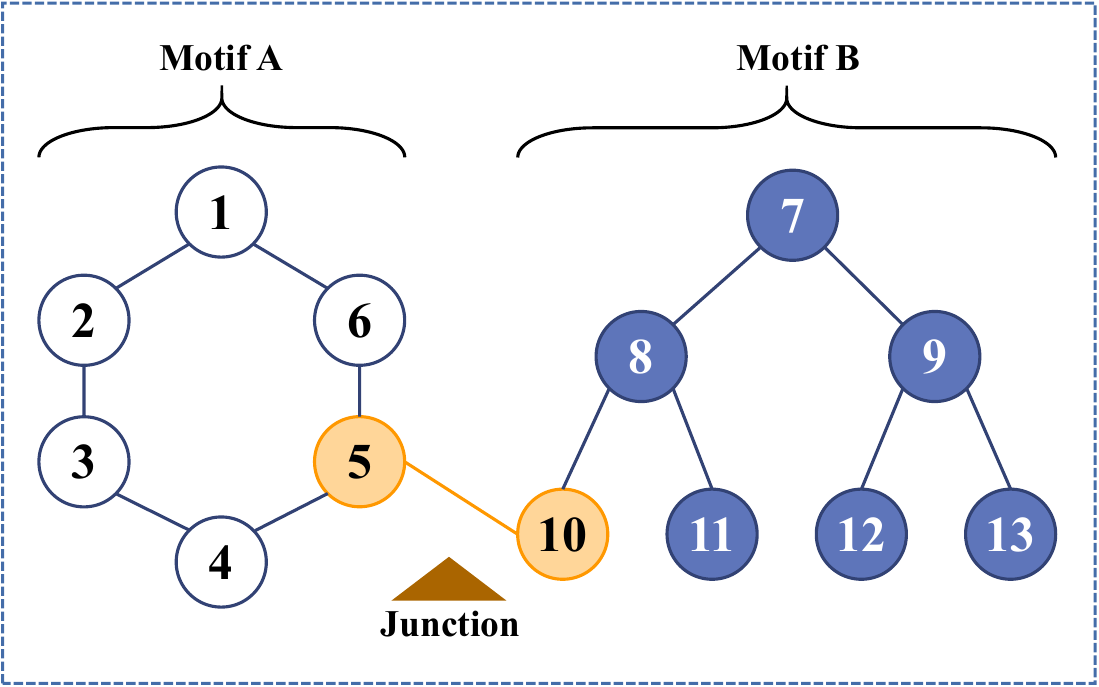}
    \caption{Example of motif junction. In the graph, the white nodes and blue nodes respectively represent two distinct motifs, while the junction of these two motifs is indicated in orange.}
    \label{fig:jun}
\end{figure}

\subsubsection{Dataset}
The datasets employed in this experiment include the following: 1) Spurious-Motif, a synthetic Out-Of-Distribution (OOD) dataset \cite{DBLP:conf/iclr/WuWZ0C22} generated using the methodology from \cite{DBLP:conf/nips/YingBYZL19}; 2) Motif-Variant, another synthetic OOD dataset created by ourselves, employing the data generation method outlined in \cite{DBLP:conf/nips/YingBYZL19}, but featuring distinct topological structures compared to the previous dataset; 3) The In-Distribution (ID) counterparts of these datasets. 4) Spurious-Motif-N and Motif-Variant-N, The node classification versions for Spurious-Motif and Motif-Variant. Nodes within different motifs are assigned distinct labels to evaluate the effectiveness of node classification. Detailed information about the datasets is provided in Table \ref{tab:datasets1}.

\subsubsection{Model $\mathcal{M}(\cdot)$ Construction}
\label{sec:mds1}

\begin{table}[h] \scriptsize
	\centering
 	\caption{Summary of datasets used for experiment with diminutive causal structure from the topological domain.}
    \vskip 0.1in
	\begin{tabular}{lccccccc}
		\toprule
		\multirow{2}*{Name}      &  \multirow{2}*{Graphs} & Average & \multirow{2}*{Classes} & \multirow{2}*{Task Type} & \multirow{2}*{Metric}       \\
  		     &  & Nodes &  & &       \\
		\midrule
        Graph-SST2 &70,042 &10.2 &2 &Classification &ACC \\
		Graph-SST5 &11,550 &21.1 &5 &Classification &ACC \\
		Graph-Twitter &6,940 &19.8 &3 &Classification &ACC \\
		\bottomrule
	\end{tabular}
	\label{tab:datasets2}
\end{table}

\begin{table*}[ht]\scriptsize
    \setlength{\tabcolsep}{10pt}
	\centering
 	\caption{Performance of graph classification accuracy in Graph-SST2, Graph-SST5, and Graph-Twitter. `(OOD)' denotes the OOD version datasets. The best records are highlighted in \textbf{bold}, and \underline{underline} denotes the second-best result.}
    \vskip 0.1in
		\begin{tabular}{l|cc||cc||cc}
			\hline\rule{0pt}{8pt}
			\multirow{2}*{Method} & \multirow{2}*{Graph-SST2} & Graph-SST2 & \multirow{2}*{Graph-SST5} & Graph-SST5 & \multirow{2}*{Graph-Twitter} & Graph-Twitter \\\rule{0pt}{8pt} 	
			  & & (OOD) &  & (OOD) &  & (OOD) \\ 
			\hline\rule{0pt}{8pt}
			\text{ARMA\cite{DBLP:conf/aaai/0001RFHLRG19}} & 89.19±0.87 & 81.44$\pm$0.59& 48.13$\pm$0.98 & 37.26$\pm$0.89 & 63.19$\pm$1.13 & 62.56$\pm$1.65 \\\rule{0pt}{8pt}
			\text{GAT \cite{velivckovic2017graph}}  & 89.89±0.68 & 81.57$\pm$0.71 & 48.51$\pm$1.86 & 37.58$\pm$1.67 & 63.57$\pm$0.95 & 62.38$\pm$0.98 \\\rule{0pt}{8pt} 
			\text{Top-k Pool \cite{gao2019graph}}  & 89.31±1.26 & 79.78$\pm$1.35 &  \underline{49.72$\pm$1.42} & 36.26$\pm$1.86 & 63.41$\pm$1.95 & 62.95$\pm$1.09 \\\rule{0pt}{8pt}
            \text{GSN \cite{DBLP:journals/pami/BouritsasFZB23}}  & 89.63$\pm$1.60 & 81.93$\pm$1.86 &  48.64$\pm$1.60 & \underline{38.78$\pm$1.84} & 63.18$\pm$1.89 & 63.07$\pm$1.18 \\\rule{0pt}{8pt}
			\text{Group DRO \cite{sagawa2019distributionally}}  & 89.94±1.60 & 81.29$\pm$1.44 &   47.44$\pm$1.12 & 37.78$\pm$1.12 & 62.23$\pm$1.43 & 61.90$\pm$1.03 \\\rule{0pt}{8pt}
			\text{IRM \cite{arjovsky2019invariant}}   & 89.55±1.03 & 81.01$\pm$1.13 & 48.08$\pm$1.30 &  38.68$\pm$1.62 &  64.11$\pm$1.58 &  62.27$\pm$1.55 \\\rule{0pt}{8pt}
			\text{V-REx \cite{krueger2021out}}  & 88.78±0.82 & 81.76$\pm$0.08 & 48.55$\pm$1.10&  37.10$\pm$1.18 &  64.84$\pm$1.46 &  63.42$\pm$1.06 \\\rule{0pt}{8pt}
			\text{DIR \cite{DBLP:conf/iclr/WuWZ0C22}}   & 89.91±0.86 & 81.93$\pm$1.26 & 49.16$\pm$1.31 &  38.67$\pm$1.38 &  65.14$\pm$1.37 &  \underline{63.49$\pm$1.36} \\\rule{0pt}{8pt}
      		\text{DISC \cite{DBLP:conf/aaai/GaoLQSXZ023}}   & 88.95$\pm$0.79 & 81.67$\pm$1.09 & 48.03$\pm$1.43 &  36.61$\pm$1.83 & 65.28$\pm$0.61   & 62.43$\pm$1.26 \\\rule{0pt}{8pt}
   			\text{RCGRL \cite{DBLP:conf/aaai/GaoLQSXZ023}}   & \underline{90.73$\pm$0.40} & \underline{82.31$\pm$1.01} & 48.56$\pm$1.53 &  37.98$\pm$1.63 & \underline{66.58$\pm$0.53}   & 62.93$\pm$1.35 \\
            \hline\rule{0pt}{8pt}
            DCS-Only-D   & 88.93$\pm$0.99 & 80.56$\pm$0.63 & 48.03$\pm$1.04 & 35.86$\pm$1.16 & 63.20$\pm$1.32 &  62.38$\pm$1.75\\\rule{0pt}{8pt}
            DCS-Only-L   & 89.25$\pm$1.15 & 81.03$\pm$0.85 & 48.30$\pm$1.07 & 36.32$\pm$0.77 & 63.30$\pm$1.09 & 61.80$\pm$1.85 \\
            \rowcolor{orange!20}\rule{0pt}{8pt}
            Average   & 89.09 & 80.80 & 48.17 & 36.09 & 63.25 & 62.09 \\
			\hline\rule{0pt}{8pt}
            DCSGL-T   & 89.73$\pm$0.77 & 82.31$\pm$0.97 & 49.20$\pm$0.89 & 38.92$\pm$1.10 & 63.26$\pm$1.22 & 63.00$\pm$1.43 \\\rule{0pt}{8pt}
            DCSGL-A   & 89.68$\pm$0.73 & 82.01$\pm$0.83 & 49.63$\pm$0.89 &  39.00$\pm$1.25 &  65.32$\pm$0.93 &  
               61.98$\pm$0.96  \\\rule{0pt}{8pt}
			\textbf{DCSGL}   & \bf{91.12$\pm$1.53} & \bf{83.13$\pm$1.33} & \bf{50.38$\pm$0.87} &  \bf{42.13$\pm$1.50} &  \bf{66.87$\pm$0.89} &  \bf{63.98$\pm$1.20}  \\
			\hline
		\end{tabular}

	\label{tab:graph}
\end{table*}

\begin{table}[h] \scriptsize
	\centering
 	\caption{Summary of computation cost}.
	\begin{tabular}{l|cc|cc}
		\hline\rule{0pt}{8pt}
		\multirow{2}*{Method}      &  Time & Additional & Memory &        
            Additional      \\\rule{0pt}{8pt}
  		 &  Cost (s)   & Time (s)  & Cost (Gb) & Memory (Gb)      \\
		\hline\rowcolor{gray!20}
            \multicolumn{5}{c}{Results on Spurious-Motif Dataset} \\
            \hline\rule{0pt}{8pt}
            Backbone &2.18 &- &1.98 &-  \\\rule{0pt}{8pt}
            DIR &4.31 &2.13 &3.61 &1.63 \\\rule{0pt}{8pt}
		  DCSGL &4.01 &1.83 &2.56 &0.58  \\
            \hline\rowcolor{gray!20}
            \multicolumn{5}{c}{Results on Graph-SST2 Dataset} \\
            \hline\rule{0pt}{8pt}
            Backbone &17.03 &- &9.13 &-  \\\rule{0pt}{8pt}
            DIR &50.31 &33.28 &13.64 &4.51 \\\rule{0pt}{8pt}
		  DCSGL &23.01 &5.98 &10.16 &1.03 \\
		\hline
	\end{tabular}
	\label{tab:tc}
\end{table}

\begin{table}[ht]\scriptsize
    \vskip 0.1in
	\setlength{\tabcolsep}{5pt}
	\centering
 	\caption{Performance with different backbones. $\Delta$ represents the improvement in accuracy after incorporating our method.}
    \vskip 0.1in
		\begin{tabular}{l|cc|c}
			\hline\rule{0pt}{8pt} 
			\multirow{2}*{Method} & \multicolumn{2}{c}{Motif-Variant } \vline & Motif-Variant \\
            \cline{2-3}\rule{0pt}{8pt} 
            & Balanced & Bias=0.9 & ID \\
			\hline\rule{0pt}{8pt}
			\text{DCSGL-Backbone}   & 48.18$\pm$3.46 & 42.31$\pm$2.13 & 94.78$\pm$1.07 \\\rule{0pt}{8pt} 
			\text{DCSGL }  & 52.16$\pm$1.88 & 47.06$\pm$1.87 & 95.53$\pm$0.93 \\
            \rowcolor{blue!20}\rule{0pt}{8pt}
            \text{$\Delta$ }  & 3.98 & 4.75 & 0.75 \\
            \hline\rule{0pt}{8pt}
            \text{FAGCN \cite{DBLP:conf/aaai/BoWSS21}}   & 46.59$\pm$1.46 & 38.96$\pm$2.03 & 91.56$\pm$1.82 \\\rule{0pt}{8pt} 
			\text{DCSGL-FAGCN}  & 50.16$\pm$1.87 & 41.60$\pm$1.66 & 91.79$\pm$1.85 \\
            \rowcolor{blue!20}\rule{0pt}{8pt}
            \text{$\Delta$ }  & 3.57 & 2.64 & 0.23 \\
            \hline\rule{0pt}{8pt}
            \text{EGC \cite{DBLP:conf/iclr/TailorOLL22}}   & 43.13$\pm$1.97 & 36.53$\pm$1.65 & 92.12$\pm$1.38 \\\rule{0pt}{8pt} 
			\text{DCSGL-EGC}  & 45.69$\pm$1.90 & 39.68$\pm$1.88 & 93.89$\pm$1.65 \\
            \rowcolor{blue!20}\rule{0pt}{8pt}
            \text{$\Delta$ }  & 2.56 & 3.15 & 1.77 \\
            \hline
		\end{tabular}

	\label{tab:multibaseline}
\end{table}

\begin{figure*}[h]
	\centering
	\subfigure[Results on Motif-Varient Bias = 0.9.]{
		\begin{minipage}{0.3\textwidth} 
			\includegraphics[width=\textwidth]{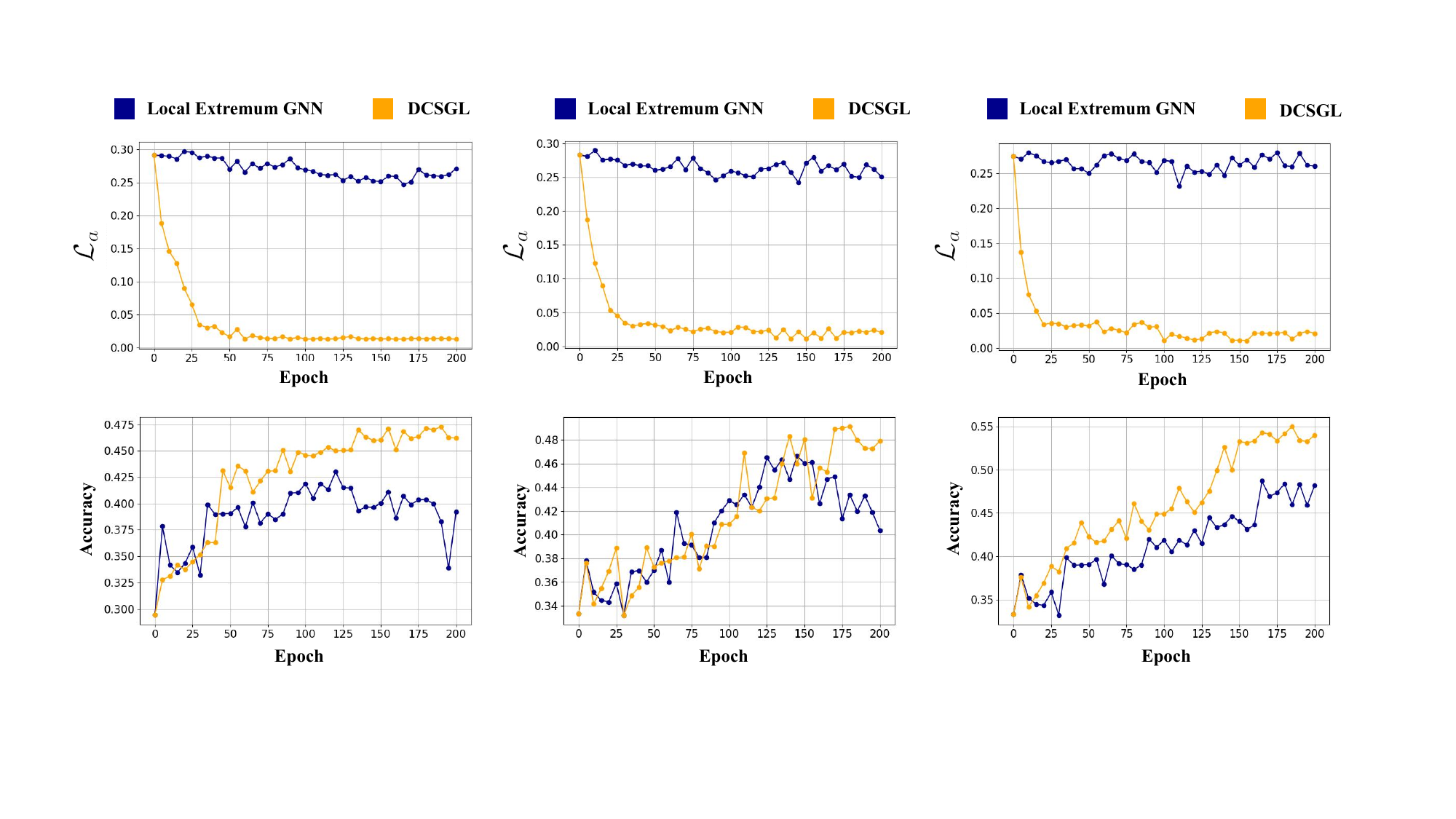} \\
                \label{fig:line_charta}
		\end{minipage}
    }\hspace{2mm}
	\subfigure[Results on Motif-Varient Bias = 0.6.]{
		\begin{minipage}{0.32\textwidth} 
			\includegraphics[width=\textwidth]{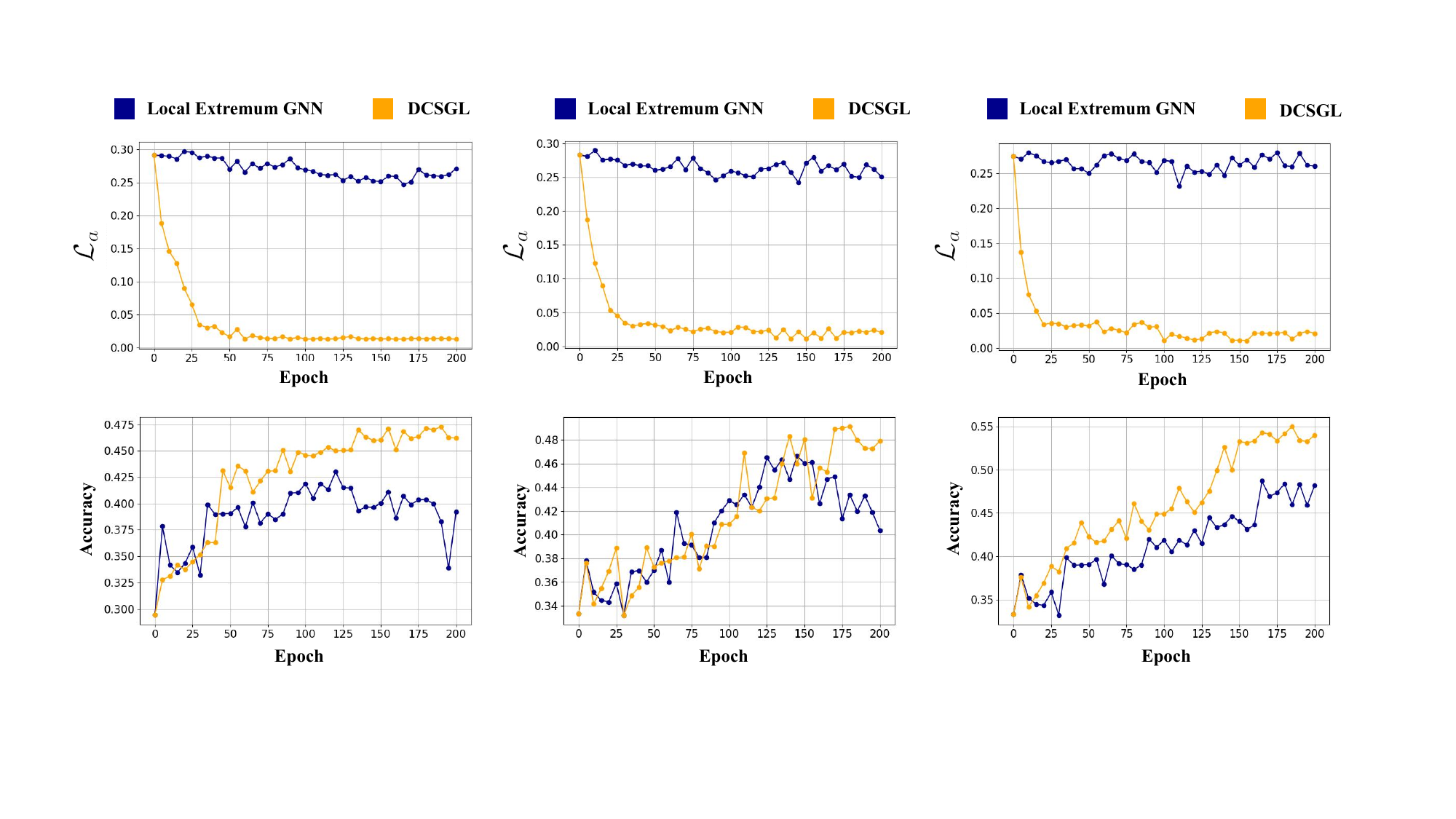} \\
                \label{fig:line_chartb}
		\end{minipage}
	}\hspace{2mm}
 	\subfigure[Results on Motif-Varient Balanced.]{
		\begin{minipage}{0.32\textwidth} 
			\includegraphics[width=\textwidth]{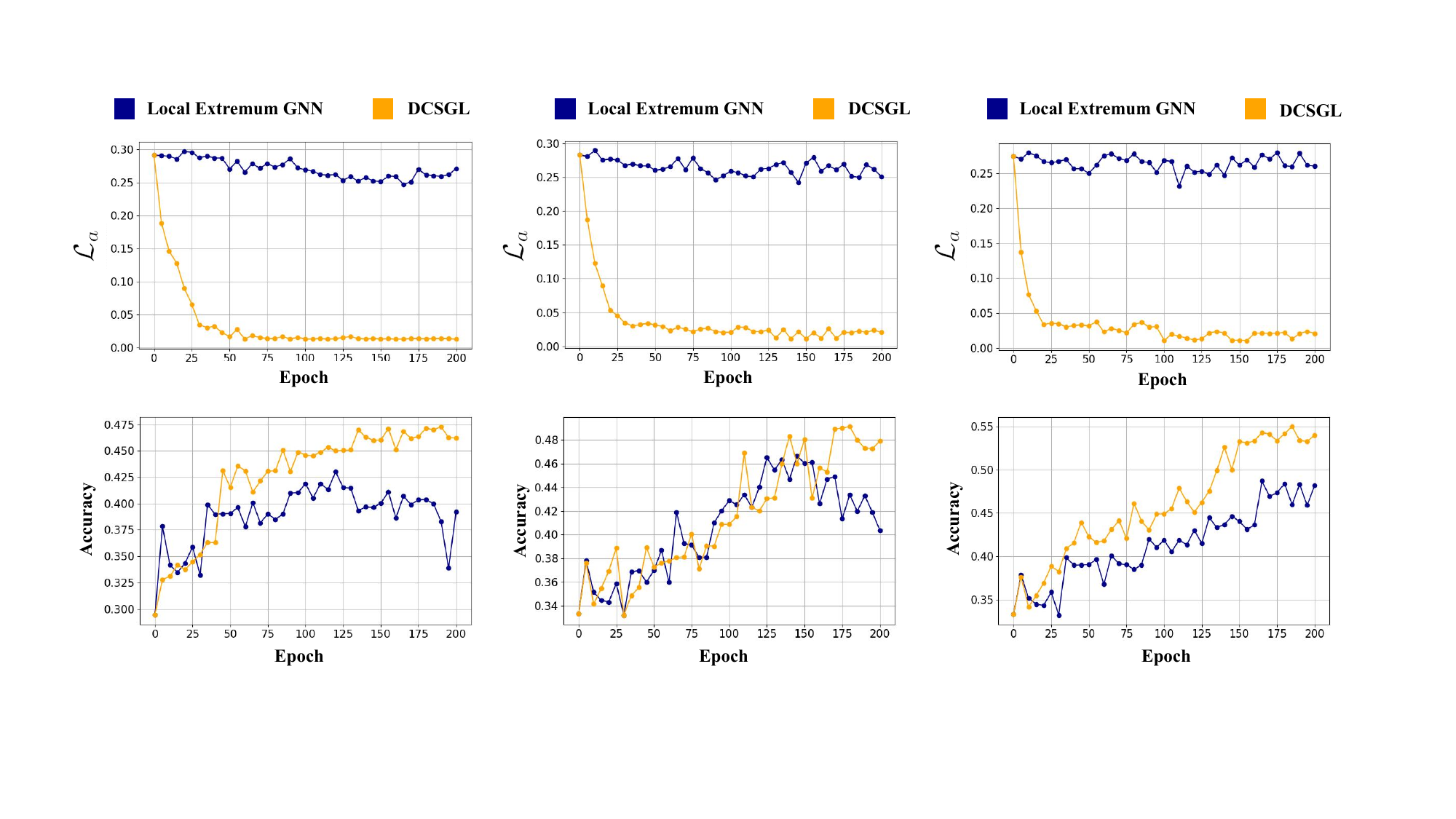} \\
                \label{fig:line_chartc}
		\end{minipage}
	}

	\caption{Accuracy and loss obtained over 200 epochs on multiple Motif-Variant datasets with distinct biases.}
	\label{fig:LC}
\end{figure*}

\begin{figure*}[h]
	\centering
  	\subfigure[Output feature visualization of Local Extremum GNN.]{
		\begin{minipage}{0.4\textwidth} 
			\includegraphics[width=\textwidth]{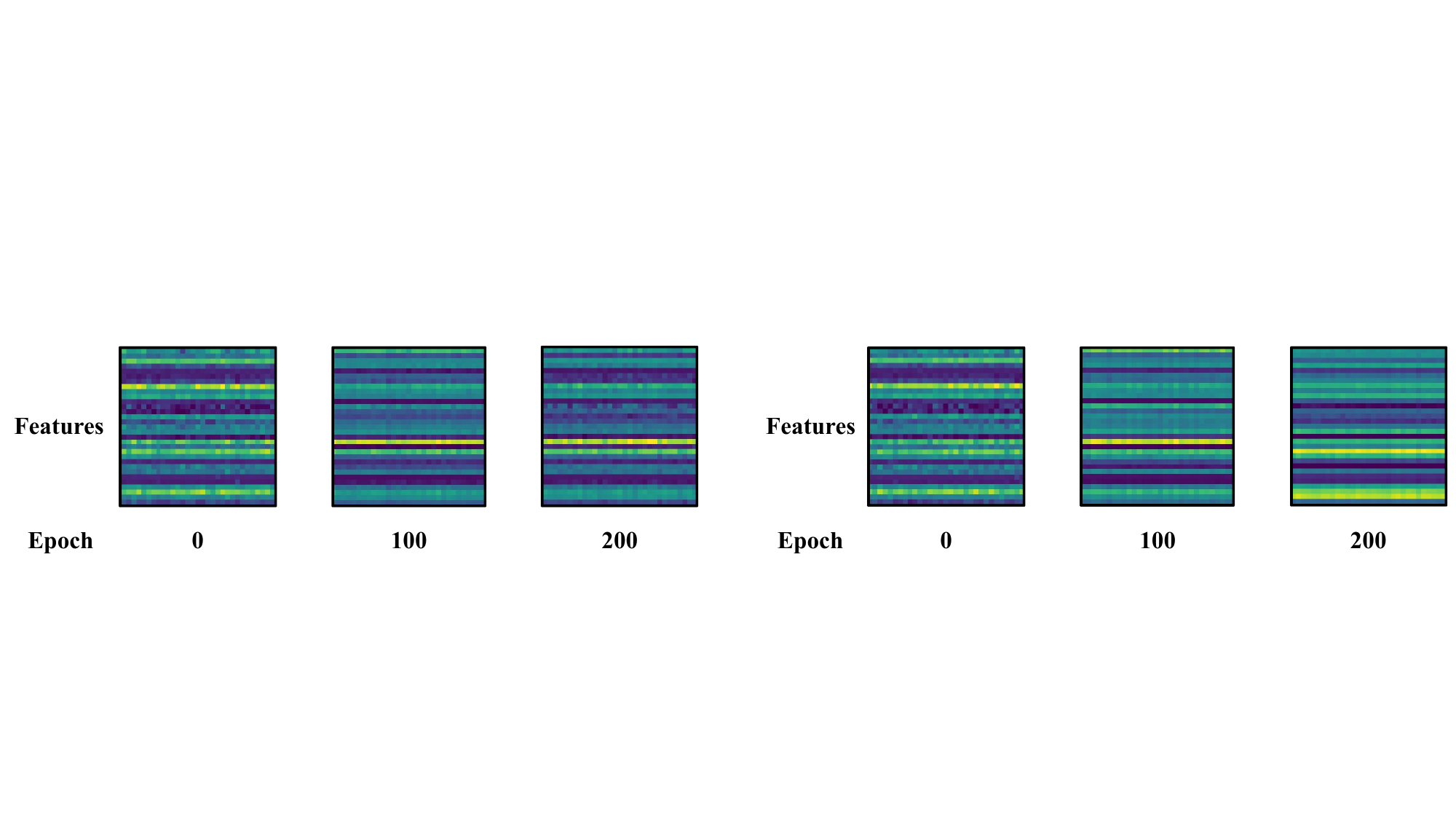} \\
                \label{fig:vis1a}
		\end{minipage}
    }\hspace{5mm}
	\subfigure[Output feature visualization of DCSGL.]{
		\begin{minipage}{0.4\textwidth} 
			\includegraphics[width=\textwidth]{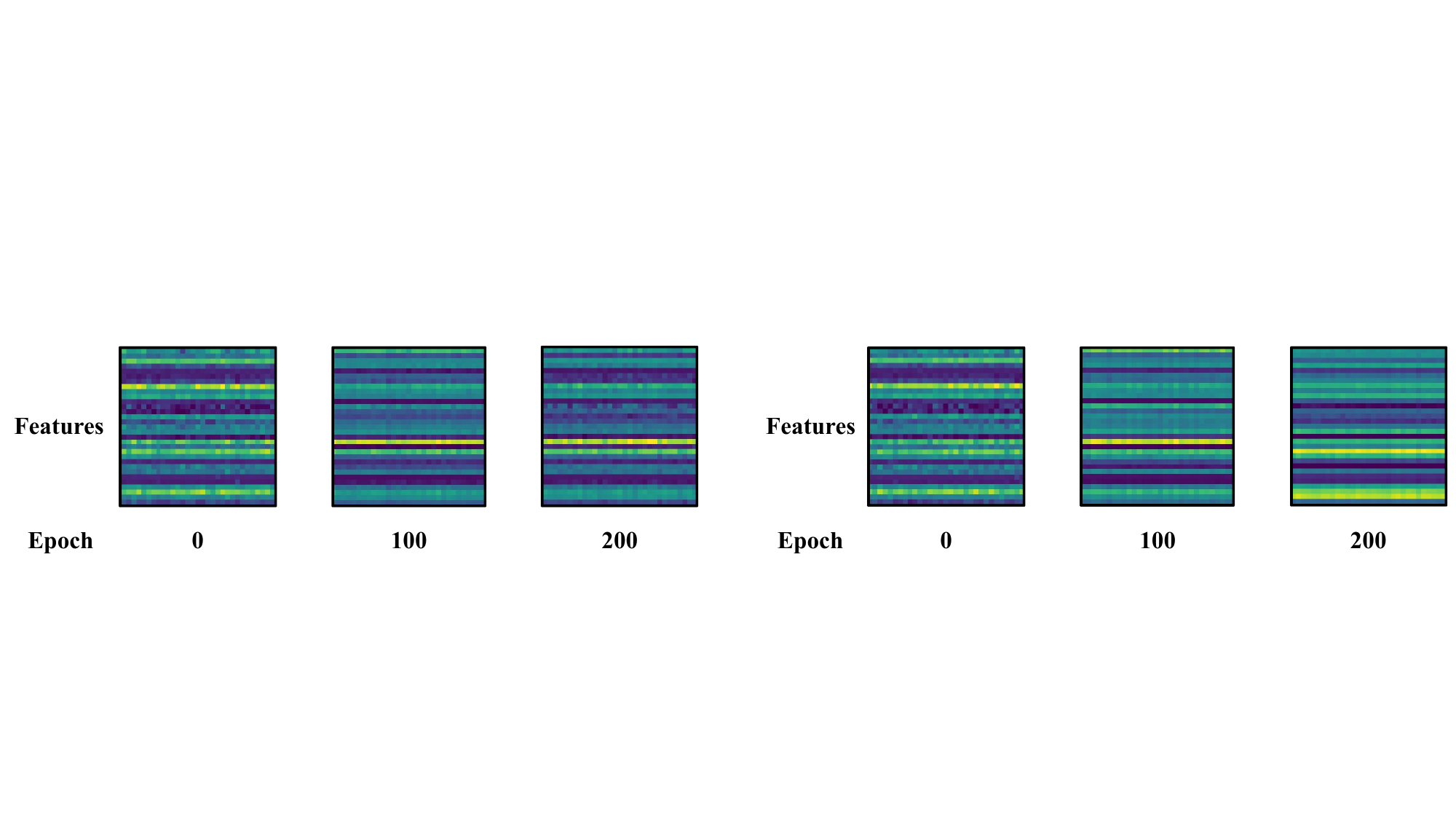} \\
                \label{fig:vis1b}
		\end{minipage}
	}
	\subfigure[t-SNE visualization results of Local Extremum GNN.]{
		\begin{minipage}{0.4\textwidth} 
			\includegraphics[width=\textwidth]{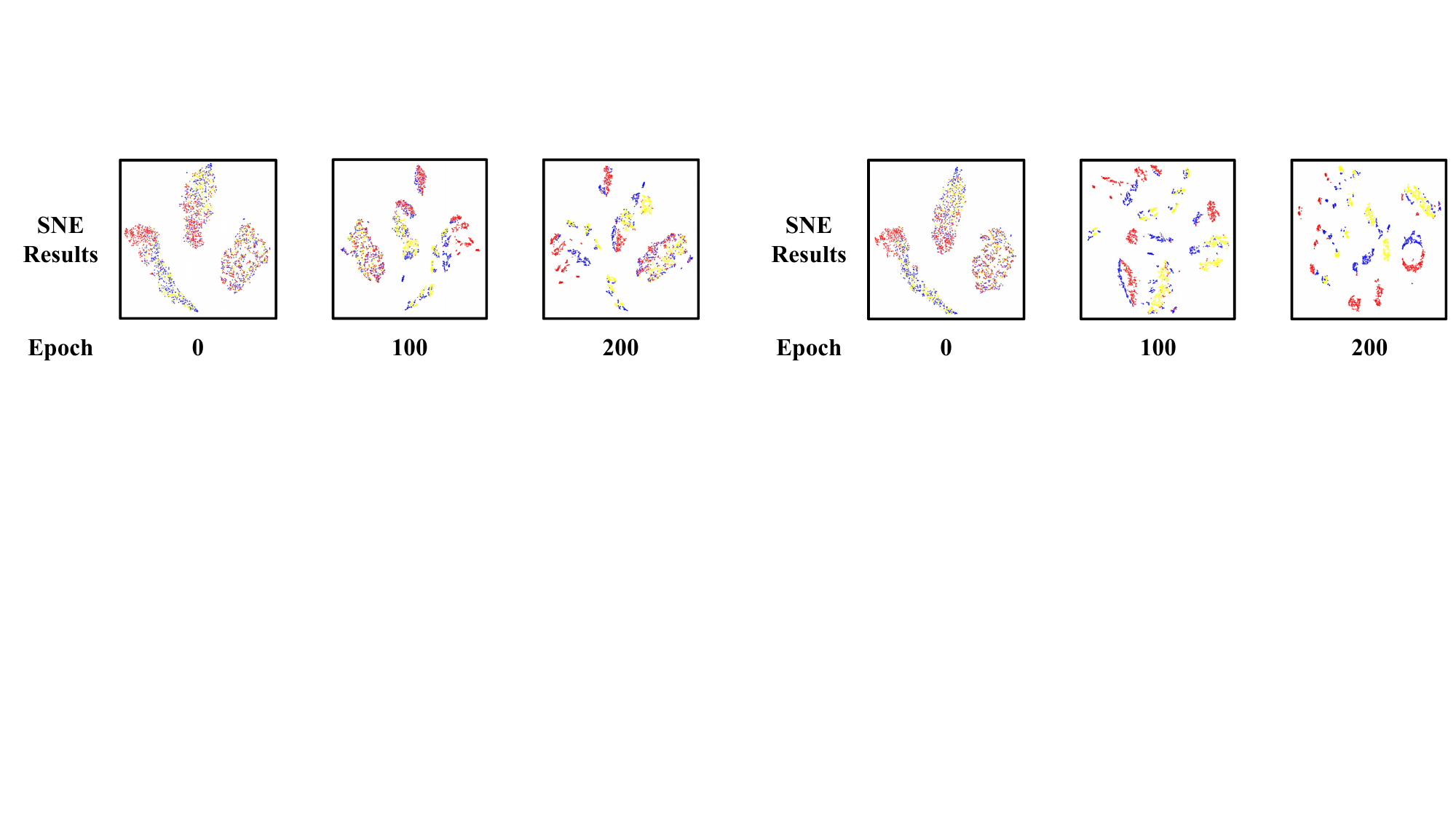} \\
                \label{fig:vis2a}
		\end{minipage}
    }\hspace{5mm}
	\subfigure[t-SNE visualization results of DCSGL.]{
		\begin{minipage}{0.4\textwidth} 
			\includegraphics[width=\textwidth]{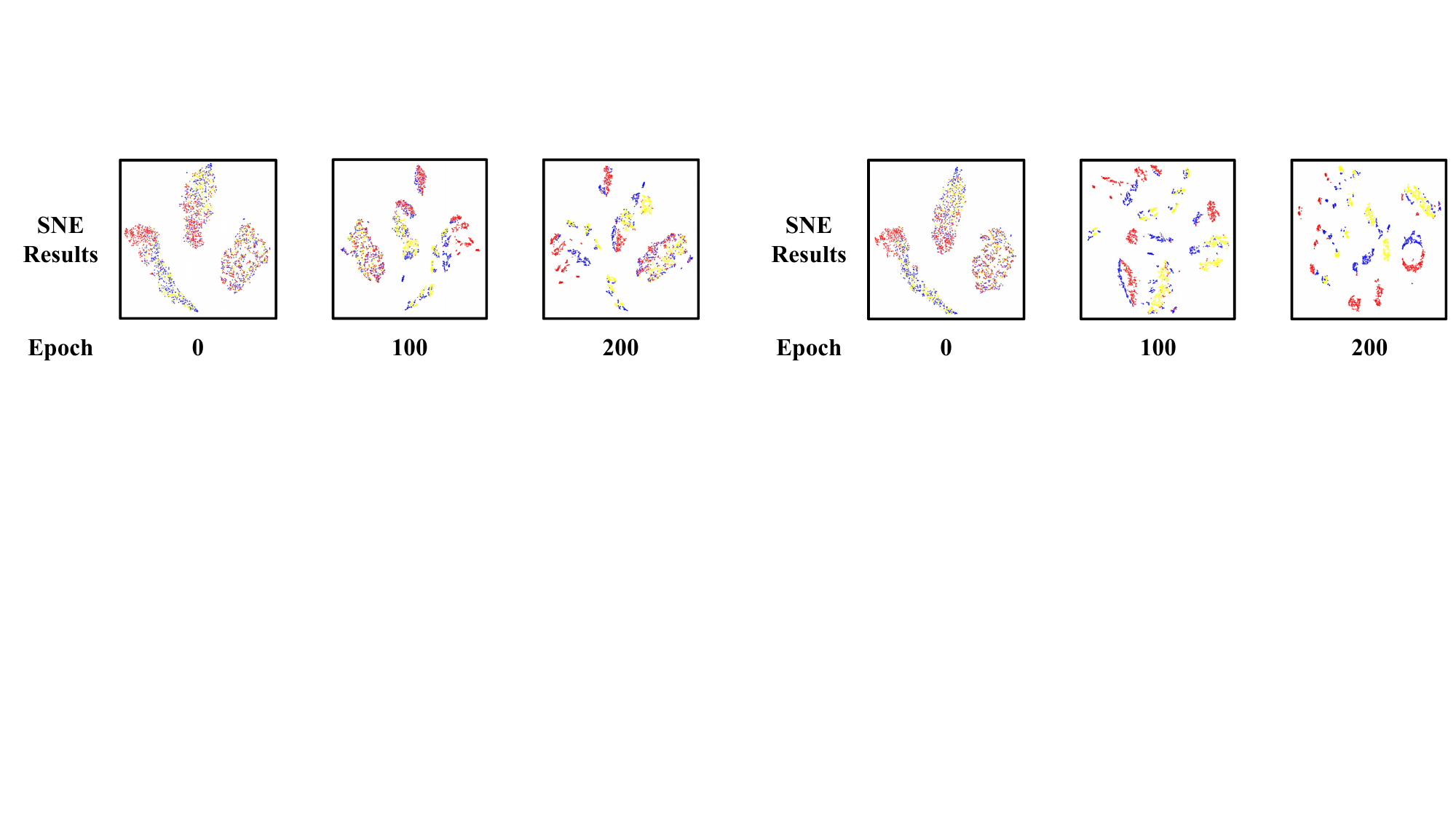} \\
                \label{fig:vis2b}
		\end{minipage}
	}

	\caption{Visualization of model outputs.}
	\label{fig:vis}
\end{figure*}

\begin{figure*}[h]
	\centering
	\subfigure[Hyperparameter experiments on Spurious-Motif (ID) and Motif-Variant (ID).]{
		\begin{minipage}{0.45\textwidth} 
			\includegraphics[width=\textwidth]{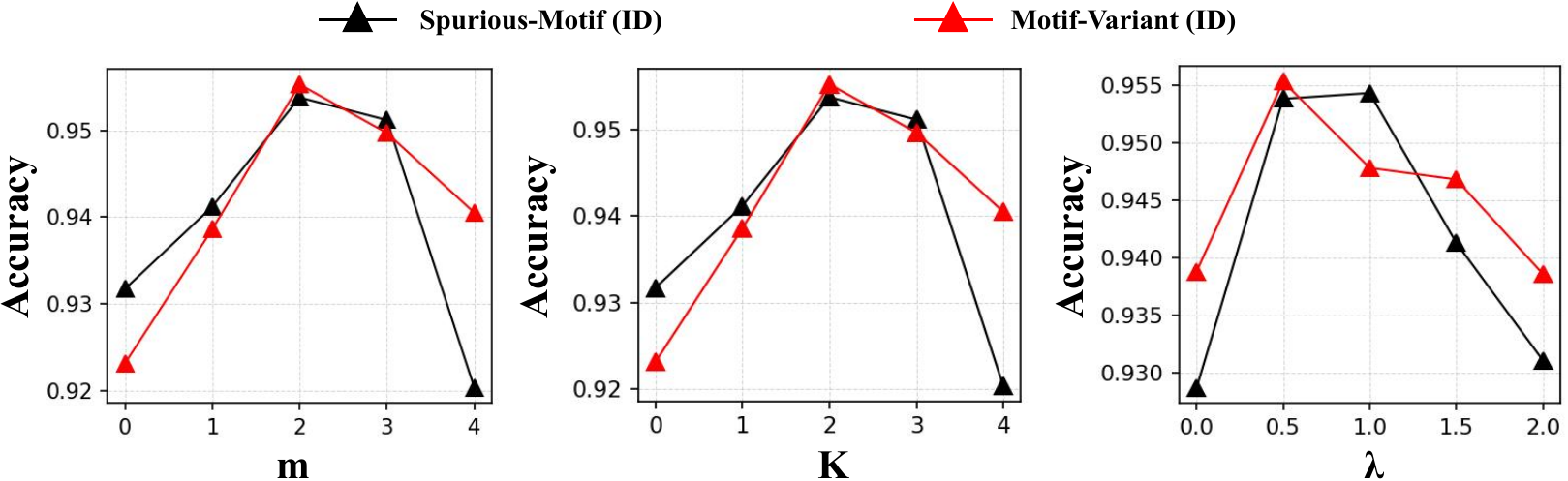} \\
                \label{fig:hypa}
		\end{minipage}\hspace{5mm}
    }
	\subfigure[Hyperparameter experiments on Spurious-Motif (OOD) and Motif-Variant (OOD).]{
		\begin{minipage}{0.45\textwidth} 
			\includegraphics[width=\textwidth]{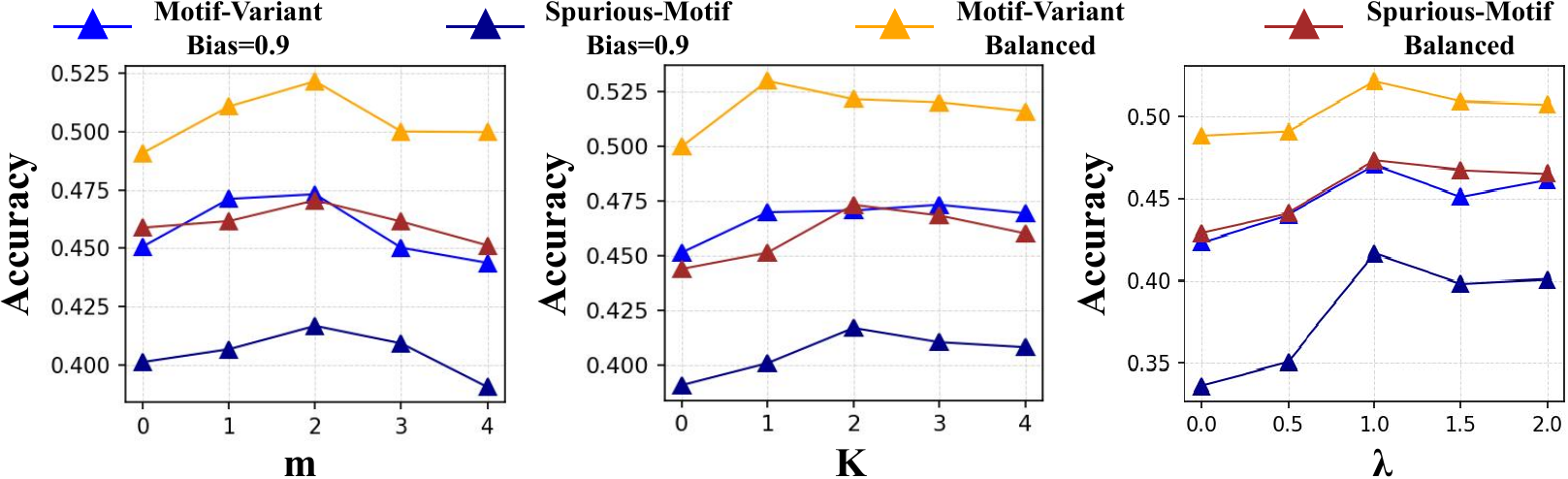} \\
                \label{fig:hypb}
		\end{minipage}
   }
	\subfigure[Hyperparameter experiments on Graph-SST2 (ID), Graph-twitter (ID) and Graph-SST5 (ID).]{
		\begin{minipage}{0.45\textwidth} 
			\includegraphics[width=\textwidth]{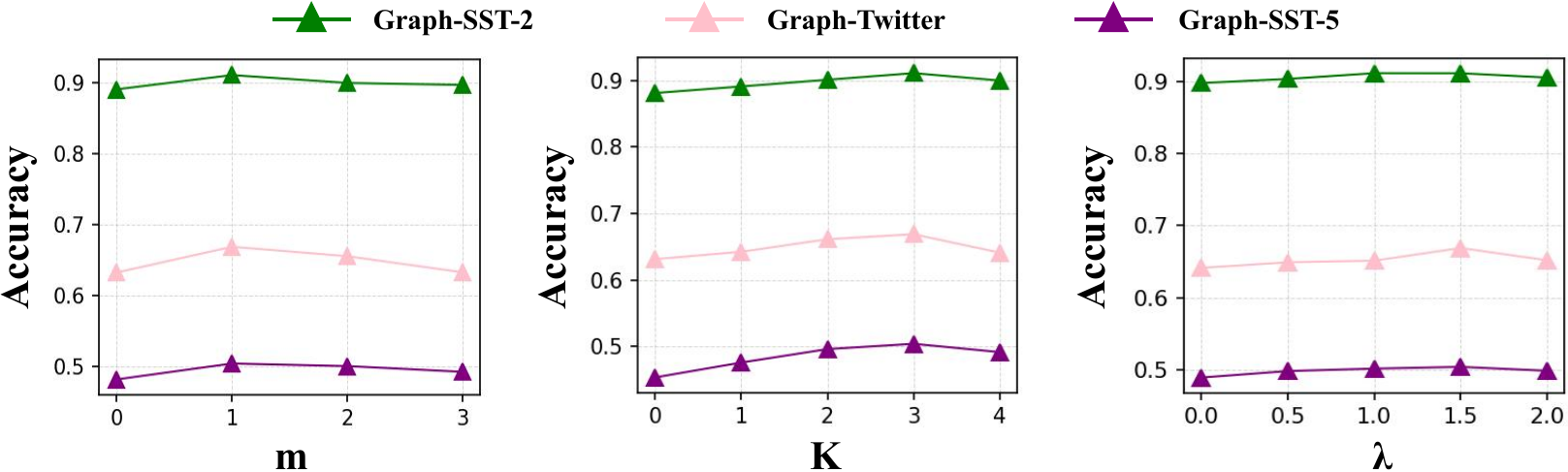} \\
                \label{fig:hypc}
		\end{minipage}\hspace{5mm}
   }
	\subfigure[Hyperparameter experiments on Graph-SST2 (OOD), Graph-twitter (OOD) and Graph-SST5 (OOD).]{
		\begin{minipage}{0.45\textwidth} 
			\includegraphics[width=\textwidth]{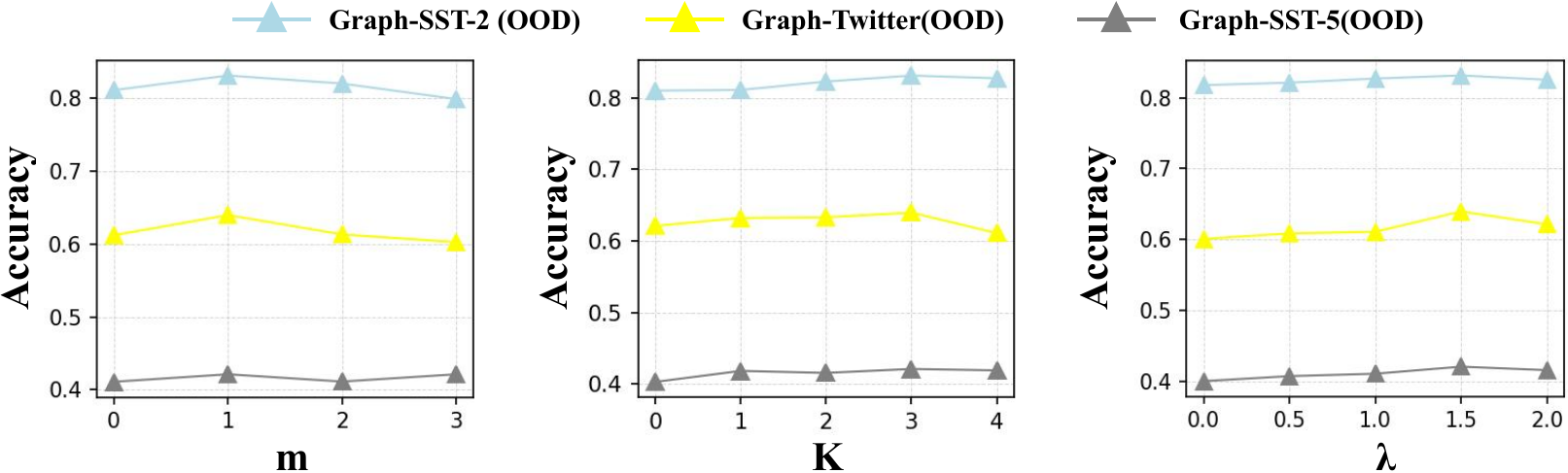} \\
                \label{fig:hypd}
		\end{minipage}
	}

	\caption{Hyperparameter Experiments.}
	\label{fig:hyp}
\end{figure*}

For the diminutive causal structure, our focus lies in the shape of junctions between distinct motifs. Graphs rich in topological information are often composed of numerous motifs with specific shapes. The junctions between these motifs typically exhibit shapes distinct from the motifs themselves. Identifying these junctions can contribute to a clearer analysis of the topological structure of the graph. However, recognizing junctions alone is insufficient for providing direct assistance in predictions. Thus, we consider the principles governing the composition of junctions as a diminutive causal structure.

To provide a clearer explanation, practical examples of motif junctions are presented in Figure \ref{fig:jun}. Subsequently, we proceed with the construction of $\mathcal{M}(\cdot)$. We employ a graph search algorithm to identify the junctions between the motifs. These discovered junctions are then labeled. Since the data in the training dataset remains constant throughout the training process, labeling needs to be performed only once, and subsequent repetitions of this process are unnecessary.

We built $\mathcal{M}(\cdot)$ to judge whether a specific node is a junction. Clearly, labeled nodes are junctions, while other nodes are not. Therefore, our $\phi(\cdot)$ is designed to output the features of nodes labeled as junctions. Thus, when the input corresponds to a labeled node, $\mathcal{M}(\cdot)$ outputs a probability of 1.

For $\mathcal{M}^{\gamma}(\cdot)$ and $\phi^{\gamma}(\cdot)$, we randomly replace 50\% of the features of the nodes output by $\phi(\cdot)$ with the features of other nodes. In this scenario, the corresponding output probability of $\mathcal{M}^{\gamma}(\cdot)$ is 0.5. We repeat the random replacement for different $\phi^{\gamma}(\cdot)$.

\subsubsection{Settings}
We compare our method with the backbone network named Local Extremum GNN and various causality-enhanced methods that can be divided into two groups: 1) the interpretable baselines, including GAT, Top-k Pool and GSN, 2) the robust learning baselines, including Group DRO, IRM, V-REx and DIR. Furthermore, hierarchical pooling methods such as MTPool \cite{DBLP:journals/nn/DuanXWHRXSW22}, may also positively impact causal information extraction. However, we illustrate this with the example of Top-k Pool and do not delve into a detailed one-to-one comparison. These approaches differ significantly from our method in that they place greater emphasis on internal modifications within the model, aiming to achieve causal modeling capabilities through architectural innovations. In contrast, our method prioritizes the introduction of external causal information. For a fair comparison, we follow the experimental principles of \cite{DBLP:conf/iclr/WuWZ0C22} and adopt the same training setting for all models. For each task, we report the mean performance ± standard deviation over five runs.

Moreover, to substantiate that the diminutive causal structure we have adopted does not inherently contribute directly to predictions, we have introduced additional models named DCS-Only-D and DCS-Only-L. In DCS-Only-D, junction marks (assigned as 1 for junction nodes and 0 for other nodes) are treated as extra node classification task labels. Meanwhile, DCS-Only-L incorporates the marking of junctions as extra graph classification training labels, providing information on the number of junctions within the graph.

To further justify the structure of our proposed method, we have introduced the DCSGL-T model. The DCSGL-T model accepts input from the last layer instead of the $m$-th layer. Additionally, we developed the DCSGL-A model to conduct further ablation studies, wherein the enhancement with interchange interventions is deliberately replaced with negative samples from random select node features.

For hyperparameters, we specify $m=2$, $\lambda=1$, and $K=3$. The learning rate is set to 0.001. Our model undergoes training for 200 epochs, and early stopping is implemented if there is no observed improvement in performance on the validation set for five consecutive epochs. The model exhibiting the best validation performance is selected as the final result. The maximum allowable number of training epochs is established at 400 for all datasets. A training batch size of 32 is employed. The backbone of our GNN architecture is the Local Extremum GNN \cite{DBLP:conf/aaai/RanjanST20}, featuring 4 layers and a hidden dimension of 32. Global mean pooling is adopted as the pooling method.

All our experiments are performed on a workstation equipped with two Quadro RTX 5000 GPUs (16 GB), an Intel Xeon E5-1650 CPU, 128GB RAM and Ubuntu 20.04 operating system. We employ the Adam optimizer for optimization. We set the maximum training epochs to 400 for all tasks. For backpropagation, we optimize Graph-SST2 and Graph-Twitter using stochastic gradient descent (SGD), while utilizing gradient descent (GD) on Spurious-Motif and Motif-Variant.

\subsubsection{Results}

The results are reported in Table \ref{tab:sm} and \ref{tab:smv}. From the results, we observe that our proposed DCSGL method consistently outperforms various baseline models across multiple datasets, demonstrating the efficacy of our approach. Additionally, it is noteworthy that the performance of DCS-Only-D and DCS-Only-L, two baseline models incorporating the diminutive causal structure, does not exhibit significant improvement compared to the fundamental backbone GNN. This suggests that the adopted diminutive causal structure is not directly task-relevant. Furthermore, the performance of DCSGL significantly surpasses the average performance of DCS-Only-D and DCS-Only-L, highlighting the utility of our introduced diminutive causal structure. This empirical evidence substantiates the effectiveness of our approach. Additionally, the ablative models, DCSGL-T and DCSGL-A, exhibit weaker performance compared to DCSGL, emphasizing the necessity of the proposed modules. This underscores the importance and effectiveness of our introduced methodology.

\subsection{Experiments with Diminutive Causal Structure from Linguist Domain}

In this section, we anchor our validation efforts on a selection of real-world datasets. The semantic graphs derived from textual information are chosen for their origin in textual knowledge, facilitating a more straightforward analysis of their internal logical structures. Additionally, datasets within the field of textual graph analysis are widely employed for the validation of graph representation learning methods \cite{DBLP:journals/tnn/WuPCLZY21}. Consequently, we employ these datasets, along with diminutive causal structures extracted from the field of linguistics, for the experiments conducted in this section.

\subsubsection{Dataset}
The datasets employed in this experiment include multiple real-world datasets, including Graph-SST2, Graph-SST5, and Graph-Twitter \cite{yuan2020explainability}, along with their OOD versions. The OOD versions were generated following \cite{DBLP:conf/iclr/WuWZ0C22}. The details of the datasets are demonstrated within Table \ref{tab:datasets2}.

\subsubsection{Model $\mathcal{M}(\cdot)$ Construction}
\label{sec:mds2}
In this section, we construct diminutive causal structures based on relevant knowledge about the conjunction `but.' A brief description of the corresponding model construction for these causal structures has been provided in the preceding text. Essentially, we employ the semantic contrastive relationship represented by the conjunction `but' to facilitate training. An illustrative example of the role of the conjunction 'but' in the textual graph is presented in Figure \ref{fig:example}.

For the model $\mathcal{M}(\cdot)$, we generate predictions regarding the existence of the conjunction `but' and its associated contrastive structure. Clearly, we can directly assess the presence of this conjunction based on the corresponding textual data in the graph. If it is present, the output probability is set to 1. Concerning $\phi(\cdot)$, we use the statement containing `but' as input. To avoid introducing additional interference, such as the presence of other types of contrastive conjunctions, we refrain from using statements without `but' during training. This results in a problem of missing negative samples.

To address this issue, we employ the interchange intervention method mentioned in Section \ref{sec:intercinterv}. Specifically, we apply $\phi^{\gamma}(\cdot)$ to delete the conjunction `but' and its subsequent content, thereby removing the contrastive relationship. Simultaneously, we have the corresponding model $\mathcal{M}^{\gamma}(\cdot)$ output a judgment result of 0, indicating the absence of the conjunction `but.' Through this approach, we supplement the negative samples. We delete different amounts of contents for different $\phi^{\gamma}(\cdot)$, and adjust the output of $\mathcal{M}^{\gamma}(\cdot)$ according to the percentage of content removed.

\subsubsection{Settings}
We maintain consistency with the baselines and most settings from the previous experiment, with the exception of the backbone GNN and hyperparameters. Specifically, we adopt ARMA \cite{DBLP:conf/aaai/0001RFHLRG19} as the GNN backbone, with three layers and a hidden dimension of 128. We set $m=2$, $K=3$, $\lambda=1.5$, and the learning rate to 0.0002.

\subsubsection{Results}
The results are demonstrated in Table \ref{tab:graph}. From the results, it is evident that DCSGL achieved optimal performance across all datasets. Additionally, when compared to the average values of DCS-Only-D and DCS-Only-L, DCSGL demonstrated a significant improvement in performance. These experimental observations substantiate the efficacy of our approach.

\subsection{In-Depth Study}

To gain a more profound understanding of the characteristics of our approach, we conducted a series of analytical experiments.

\subsubsection{Generalizability Analysis}



To analyze the generalization capability of our method, we conducted additional experiments on the Motif-Variant dataset by replacing the backbone network. Compared to the original backbone network, we introduced two new backbones: FAGCN and EGC. We separately tested the performance of the models using only the backbone network and the models incorporating the DCSGL method.

The experimental results are presented in Table \ref{tab:multibaseline}. It can be observed that DCSGL achieved certain improvements on the two newly introduced baselines. On datasets with both confounding factors and those without confounding factors, DCSGL consistently enhanced the algorithm's performance. These experimental results strongly validate the generalization capability of our proposed method. We attribute this generalization to the structural design of DCSGL, which does not rely on specific network architectures but can inject subtle causal structures into different networks.

\subsubsection{Computational Complexity Analysis}

For the purpose of computational complexity analysis, we will undertake additional investigation and experimentation. The computational complexity of DCSGL is intricately linked to the volume of graph data involved in the computation and the number of GNN layers engaged. During the computation of logic learning loss, a subset of graph data was selected, and a portion of network layers was extracted to perform forward propagation on the GNN. Therefore the computational complexity associated with this part is similar to that of graph neural networks, scaling linearly with the number of edges in the selected graph data, i.e., $\mathcal{O}(|\mathcal{E}| \times F)$, where $\mathcal{E}$ denotes the selected edge set, $F$ denotes the computational complexity induced by dimensionality. Based on Equation \ref{eq:Lc}, the subsequent addition of KL divergence calculation, pooling and MLP operations maintains the computational complexity at $\mathcal{O}(|\mathcal{E}| \times F)$. Consequently, the additional computational complexity of DCSGL is less than the computational complexity of the backbone GNN, and proportional to $|\mathcal{E}|$. We have conducted further analysis through experimental validation.

We separately recorded the computation time and GPU memory consumption for a single epoch of the standalone backbone network, DIR, and our proposed method on the Spurious-Motif dataset and Graph SST2 dataset, as depicted in Table \ref{tab:tc}. It is observed that, although our method incurs an additional computational time compared to the standalone backbone network, this overhead is less than that of DIR. Furthermore, our method achieves superior performance compared to DIR, indicating that such computational costs are deemed acceptable.

\subsubsection{Training Procedure Evaluation}

To gain a comprehensive understanding of the dynamic evolution of our model during the training process, we conducted meticulous monitoring, encompassing both accuracy and $\mathcal{L}_{a}$ loss, across multiple Motif-Variant datasets characterized by varying bias coefficients. Additionally, we performed a similar $\mathcal{L}_{a}$ loss calculation for the Local Extremum GNN, acting as the backbone GNN, akin to the DCSGL method, albeit without utilizing backpropagation for model updates.

The results presented in Figure \ref{fig:LC} distinctly illustrate a rapid and substantial reduction in the $\mathcal{L}_{a}$ loss for DCSGL over the entire course of 200 epochs, maintaining a consistently low level. Simultaneously, the model's performance continues to improve even after multiple epochs. These findings unequivocally indicate that DCSGL not only possesses the capability to effectively diminish causal structure loss during the learning process but also sustains a continuous enhancement in its performance over multiple epochs.

Notably, we observed that datasets with different bias coefficients have similar values of $\mathcal{L}_{a}$ loss, suggesting that the encoded diminutive causal structure by DCSGL is not intricately intertwined with the complexity of downstream tasks. Consequently, DCSGL demonstrates robust learning efficiency in the face of dataset variations and varying task contexts. 

We observe distinct behavior in the baseline model, Local Extremum GNN. Despite a gradual decrease in the $\mathcal{L}_{a}$ loss during training, it does not exhibit as pronounced a reduction as the DCSGL method. This phenomenon aligns with the experimental findings presented in Figure \ref{fig:mtv1}, indicating that, throughout training, a general GNN model can to some extent acquire knowledge about the diminutive causal structure. However, the model falls short of fully capturing such a causal structure. Additionally, the knowledge of the diminutive causal structure acquired by Local Extremum GNN may be prone to forgetting as evidenced by the $\mathcal{L}_{a}$ loss resurging in later stages, as illustrated in the graph.

Furthermore, we could also see a decline in the performance of Local Extremum GNN on certain datasets during the later stages of training. Consequently, we can conclude that, due to the absence of explicit support for a well-defined diminutive causal structure akin to DCSGL, Local Extremum GNN struggles to sustain a more continuous and stable learning process. This limitation becomes particularly evident in datasets containing confounding factors.

Our observations lead us to conclude that DCSGL, through the effective integration of diminutive causal structures, enables the model to consistently maintain a correct causal framework at specific locations. This stability aids the model in preserving a consistent understanding of the desired causal relationships throughout the learning process, thereby facilitating continuous and effective learning. The adaptive learning strategy allows DCSGL to better accommodate dynamic changes in tasks and diverse biases in datasets, showcasing its substantial potential in complex and diverse real-world applications. In practical scenarios, leveraging the advantages of incorporating diminutive causal structures lies in the ability to make optimal use of readily available domain knowledge, ensuring outstanding performance across various tasks within the domain.

\subsubsection{Visualized Evaluation}

Figure \ref{fig:vis} illustrates the visualized output features of our model, presenting two modes of data visualization. Specifically, Figure \ref{fig:vis1a} and Figure \ref{fig:vis1b} provide visualization of the output features. In each block of these two figures, each vertical line represents the model's representation of the graph sample. All showcased sample features belong to the same category, and thus, the horizontal lines in the figure signify the model's representation at the same position for different samples within the same category. In essence, smoother horizontal lines in the figure indicate a more consistent information representation between samples, while noise on the horizontal lines suggests inconsistent features between samples.

In comparison to the backbone network Local Extremum GNN, DCSGL generates more consistent representations at the 100th and 200th epochs, highlighting its significant superiority in knowledge learning. Furthermore, there are notable changes in DCSGL's representations between the 100th and 200th epochs, indicating that the model progressively optimizes its feature representation over continuous learning. This allows the model to continually acquire new knowledge while ensuring a stable representation of certain causal knowledge that remains unaffected by new information.

Figure \ref{fig:vis2a} and \ref{fig:vis2b} demonstrates the feature representations of all samples after dimensionality reduction using the t-SNE \cite{van2008visualizing} algorithm. Different colors in the figure represent different categories. It is evident from the figure that DCSGL can indeed learn more refined and accurate features. This outstanding performance is attributed to the injection of minute causal relationships into our model, enabling the model to continuously optimize its feature representation based on accurately determined knowledge, thereby ensuring the model's accuracy. The introduction of these subtle causal relationships enhances the model's flexibility in adapting to and understanding complex data structures, providing a more reliable foundation for its application in various task scenarios. In conclusion, these results thoroughly validate the superiority of our model and its robust capability in learning and representing causal relationships.

\subsubsection{Hyperparamter Analysis}

In Figure \ref{fig:hyp}, the experimental results regarding $m$ and $K$ further confirm that extracting representations in the intermediate layers for causal knowledge learning, along with the simultaneous application of multiple exchange interventions, significantly enhances the model's performance in causal relationship learning. It underscores the effectiveness of our approach in strengthening the model's understanding of causal relationships. It is noteworthy that by adjusting the $\lambda$ parameter, we can effectively balance the relationship between causal learning and dataset training, providing better control and adjustment for the overall performance of the model.

These experimental results not only emphasize the necessity of the proposed structure but also deepen our understanding of the model's performance. Specifically, our structural design, considering the joint utilization of intermediate layer representations and multiple exchange interventions, plays a crucial role in causal knowledge learning. Through sensitivity analysis of the $\lambda$ parameter, we further confirm its importance in balancing causal learning and dataset training.

Furthermore, we observe a relatively consistent performance variation of the model across different datasets within the same domain. This suggests that despite dataset differences, the consistent introduction of diminutive causal structures results in a uniform trend in model performance. This further highlights the generality and robustness of our proposed method, enabling it to consistently outperform on diverse datasets.

\section{Conclusions}

Drawing on empirical observations from motivating experiments, it becomes clear that integrating diminutive causal structures into graph representation learning significantly enhances model performance. In response, we present DCSGL, a methodology explicitly crafted to guide the model in learning these specific causal structures. DCSGL accomplishes this objective by leveraging insights gained from models constructed based on diminutive causal structures. The learning process is further refined through interactive interventions. The validity and effectiveness of our approach are substantiated through rigorous theoretical analyses. Additionally, empirical comparisons underscore the substantial performance enhancements achieved by DCSGL.

\section*{Acknowledgements}
The authors would like to thank the editors and reviewers for their valuable comments. This work is supported by the National Funding Program for Postdoctoral Researchers, Grant No. GZC20232812, the Fundamental Research Program, Grant No. JCKY2022130C020, the CAS Project for Young Scientists in Basic Research, Grant No. YSBR-040. 

\bibliographystyle{elsarticle-num} 
\bibliography{references}


\end{document}